\documentclass{article}

\usepackage{hyperref}

\usepackage{microtype}
\usepackage{graphicx}
\usepackage{booktabs}
\usepackage{amsmath}
\usepackage{amsfonts}
\usepackage{amssymb}
\usepackage{pifont}
\usepackage{amsthm}
\usepackage{subfig}
\usepackage{array,multirow,graphicx}
\usepackage{enumitem}
\usepackage{thm-restate}

\RequirePackage{algorithm}
\RequirePackage{algorithmic}

\setitemize{noitemsep,topsep=1.0pt,parsep=1.0pt,partopsep=1.0pt}
\setenumerate{noitemsep,topsep=1.0pt,parsep=1.0pt,partopsep=1.0pt}

\DeclareMathOperator*{\argmin}{arg\,min}
\DeclareMathOperator*{\argmax}{arg\,max}

\usepackage[dvipsnames]{xcolor}
\definecolor{ao}{rgb}{0.0, 0.0, 1.0}
\definecolor{ao(english)}{rgb}{0.0, 0.5, 0.0}

\definecolor{mydarkblue}{rgb}{0,0.08,0.45}
\hypersetup{ %
    colorlinks=true,
    linkcolor=mydarkblue,
    citecolor=mydarkblue,
    filecolor=mydarkblue,
    urlcolor=mydarkblue
}

\newcommand{\R}{\mathbb{R}}

\newcommand{\cX}{\mathcal{X}}
\newcommand{\cY}{\mathcal{Y}}
\newcommand{\cE}{\mathcal{E}}

\DeclareMathOperator*{\E}{\mathbb{E}}

\newcommand{\adult}{\texttt{Adult}}
\newcommand{\polaritydata}{\texttt{rt-polaritydata}}
\newcommand{\threshold}{\texttt{threshold}}

\newtheorem{theorem}{Theorem}
\newtheorem{claim}{Claim}

\newtheorem{definition}{Definition}
\newtheorem{corollary}{Corollary}

\title{Framework for Evaluating Faithfulness of Local Explanations}

\author{
    Sanjoy Dasgupta\\
    {\small University of California San Diego}\\
    {\small \texttt{dasgupta@eng.ucsd.edu}}
    \and
    Nave Frost\\
    {\small Tel Aviv University}\\
    {\small \texttt{\quad navefrost@mail.tau.ac.il}}
    \and
    Michal Moshkovitz\footnote{This work was partially done while the author was at the University of California San Diego.}\\
    {\small Tel Aviv University}\\
    {\small \texttt{moshkovitz5@mail.tau.ac.il}}
}
\date{}

\begin{document}

\maketitle

\begin{abstract}
We study the faithfulness of an explanation system to the underlying prediction model. We show that this can be captured by two properties, {\bf consistency} and {\bf sufficiency}, and introduce quantitative measures of the \emph{extent} to which these hold. Interestingly, these measures depend on the test-time data distribution.
For a variety of existing explanation systems, such as anchors, we analytically study these quantities. We also provide estimators and sample complexity bounds for empirically determining the faithfulness of black-box explanation systems. 
Finally, we experimentally validate the new properties and estimators. 
\end{abstract}

\section{Introduction}
Machine learning is an integral part of many human-facing computer systems and is increasingly a key component of decisions that have profound effects on people's lives. There are many dangers that come with this. For instance, statistical models can easily be error-prone in regions of the input space that are not well-reflected in training data but that end up arising in practice. Or they can be excessively complicated in ways that impact their generalization ability. Or they might implicitly make their decisions based on criteria that would not considered acceptable by society. For all these reasons, and many others, it is crucial to have models that are understandable or can {\bf explain} their predictions to humans \cite{kim2021machine}.

Explanations of a classification system can take many forms, but should accurately reflect the classifier's inner workings. Perhaps the best scenario is where the model itself is inherently understandable by humans. This is arguably true of decision trees, for instance. If the tree is small, then it can be fathomed in its entirety: a {\bf global explanation} of every prediction the model makes. If the tree is large, it can be hard to understand as a whole, but as long as it has modest depth, any individual prediction can be {\bf locally explained} using the features on the corresponding root-to-leaf path.

A common situation is where the predictive model is not inherently understandable, either at a global or local level, and so a separate {\bf post-hoc explanation} is needed. These are typically local, in the sense that they explain a specific prediction and perhaps also explain what the model does on other nearby instances. Over the past few years, many strategies for post-hoc explanation have emerged, such as LIME \cite{ribeiro2016should}, Anchors \cite{ribeiro2018anchors}, and SHAP \cite{lundberg2017unified}.

Explanation systems need to satisfy two broad criteria: the explanations should (i) make sense to a human user and (ii) be an accurate reflection of the actual predictive model. The first of these is hard to pin down because it is inextricably linked to vagaries of human cognition: is a linear model ``understandable'', for instance? Further research is needed to better characterize what (i) might mean. This paper focuses on criterion (ii): gauging the {\bf faithfulness} of explanations to the underlying predictive model, or put differently, the {\bf internal coherence} of the overall explanation system.

\subsection{Contributions}

We focus on classification problems and on explanation systems that consist of two components:
\begin{itemize}
\item A prediction function (the classifier) $f: \cX \to \cY$, where $\cX$ is the instance space and $\cY$ is the label space.
\item An explanation function $e: \cX \to \cE$, where $\cE$ is the space of explanations, or properties.
\end{itemize}
The explanation function explains the prediction $f(x)$ by pointing out some relevant property of the input. These properties can be quite general. Consider, for instance, a decision tree. Its prediction $f(x)$ on a point $x$ can be explained by the features on the root-to-leaf path for $x$; the explanation $e(x)$ is the conjunction of these features, e.g. ``$(x_2 > 0.5) \wedge (x_4 = \mbox{true}) \wedge (x_{10} < -1)$''. Thus the set $\cE$ has a conjunction for each leaf of the tree.

Or consider a classifier that takes an image $x$ of a landscape and returns its biome, e.g., {\tt rainforest}. One way this predictor $f(x)$ might operate is by identifying telltale flora or fauna in the image. For instance, if the image contains a zebra then its biome must be savannah: $f(x) = \mbox{\tt savannah}$ and $e(x) = \mbox{``contains a zebra''}$. Although such explanations are based on nontrivial attributes of the input, they are comprehensible to humans and are within the scope of our setup.

For an explanation system to be internally coherent, it should satisfy two properties:
\begin{itemize}
\item {\bf Consistency:} Roughly, two instances $x,x'$ that get the same explanation should also have the same prediction.

For instance, if two different images are assigned the same explanation, $e(x) = e(x') = \mbox{``contains a zebra''}$, then their assigned labels should also be the same.

\item {\bf Sufficiency:} If $x$ is assigned an explanation $e(x) = \pi$ that also holds for another instance $x'$ (even if $e(x') \neq \pi$), then $x'$ should have the same label as $x$.

For instance, if an image $x$ is assigned explanation $e(x) = \mbox{``contains a zebra''}$ and label $f(x) = \mbox{\tt savannah}$, then a different image $x'$ that also happens to contain a zebra should get the same label, even if it is assigned a different explanation, e.g. $e(x') = \mbox{``contains a baobab tree''}$.
\end{itemize}
These properties are desirable but might not hold in all cases. We introduce quantitative measures of the \emph{extent} to which they hold. 

With these measures in hand, we study a variety of established explanation systems: decision trees, Anchors, highlighted text, LIME,  SHAP, gradient-based method, $k$-nearest neighbors, and counterfactuals.
We show how they map into our framework and study their faithfulness. For instance, we prove that SHAP has perfect consistency, while LIME does not. 
We also have results at a higher level of abstraction. We formalize a natural sub-category of explanation systems that we call \emph{explicitly scoped rules}, that includes decision trees, anchors, and highlighted text. These have a common structure that permits their faithfulness to be studied in generality.

Another important use of these quantitative measures is to \emph{empirically} characterize the faithfulness of black-box explanation systems whose internals might not be known. We give statistical estimators for doing so and characterize their sample complexity. Along the way, we formalize what property a black-box explanation system should possess in order for its faithfulness to be easily verifiable. Roughly, this corresponds to a particular type of \emph{compression} achieved by the explanations. Indeed, we show (Claim~\ref{clm:unverifiable_explainer}) that absent any such compression, verification is not possible.

An interesting aspect of our measures is that the extent of faithfulness of an explanation system depends on the data distribution to which it will be applied, and thus might not be known at training time. Thus faithfulness may need to be assessed anew for each new setting in which the system will be used. In general, there is a tradeoff between simplicity of explanations and fidelity to the predictor. When explaining an animal recognizer, for instance, it might be reasonable to ignore special cases like marsupials if the system is used in North America, but not if it is used in Australia.

\vspace{-1em}
\paragraph{Summary of contributions:}
\begin{itemize}
\item Framework for evaluating the faithfulness of black-box explanation systems
\item Analysis of popular explanation methods
\item Estimators for faithfulness, with rates of convergence
\item Ease of estimation depends upon a notion of compression achieved by the explanations
\item Empirical evaluation of these measures and estimators
\item Highlighting fundamental properties of the faithfulness measure such as data dependence
\end{itemize}

\subsection{Related Work}
There are many types of explanation \cite{lipton2018mythos,molnar2019}. At a high level, we can separate them into two groups.
In \emph{intrinsic explanations}, 
the prediction models themselves are simple and self-explanatory, such as decision trees \cite{quinlan1986induction}, decision lists \cite{rivest1987learning}, and risk scores \cite{ustun2019learning}.
\emph{Post-hoc explanations} are applied to existing predictors and come in many varieties,
as described throughout the paper.

The importance of evaluating explanation methods has been discussed in the literature \cite{leavitt2020towards,zhou2021evaluating}.  
There are various attempts to measure different aspects of an explanation: usefulness to humans  \cite{jesus2021can,mohseni2018human,poursabzi2021manipulating}; complexity \cite{poppi2021revisiting}; difficulty of answering queries \cite{barcelo2020model}; and robustness \cite{alvarez2018robustness}. In this paper, we measure faithfulness to the model. Earlier work has looked at global measures of this type~\cite{wolf2019formal} and measures that are specialized to
neural networks \cite{poppi2021revisiting}, feature importance \cite{amparore2021trust,carmichael2021objective,sundararajan2017axiomatic,velmurugan2021developing}, rule-based explanations \cite{margot2020new}, surrogate explanation \cite{ribeiro2016should}, or highlighted text \cite{chen2018learning,wang2020towards,yoon2018invase}.  
\section{Framework}\label{sec-framework}
As described in the introduction, we think of an explanation system as consisting of a \emph{prediction function} (classifier) $f: \cX \to \cY$ and an \emph{explanation function} $e: \cX \to \cE$. (If $e(\cdot)$ is randomized, we can focus on one random seed.)
The local explanation for model $f$ at instance $x$ is some relevant property of $x$, denoted $e(x)$. The selected property should ideally be enough, on its own, to predict label $f(x)$. This general intuition has appeared in many places in the literature. Here we break it into two components---\emph{consistency} and \emph{sufficiency}---and provide precise measures of each.

\subsection{Consistency}\label{sec:consistency}

For any explanation $\pi \in \cE$, consider the set of instances that are assigned this explanation:
$$ C_\pi = \{x \in \cX: e(x) = \pi \}.$$
If $\pi$ is a good explanation, then we would hope that these instances all have the same predicted label. This is \emph{consistency}: instances that are assigned the same explanation should also be assigned the same prediction.

For some explanation systems, this may not hold all the time. We would like to quantify the extent to which it holds. We start by introducing a measure of the homogeneity of predictions in $C_\pi$. In order to do this, we need a distribution $\mu$ over instances $\cX$. This can be thought of as the distribution of instances that arise in practice.

\begin{definition}[local consistency]  The consistency of explainer $e$ for model $f$ at instance $x$, with respect to distribution $\mu$, is defined as 
$$m^c(x) = \Pr_{x'\in_\mu C_\pi}(f(x') = f(x))$$
where $\pi = e(x)$ and the notation $x' \in_\mu C_\pi$ means ``$x'$ is drawn from distribution $\mu$ restricted to the set $C_\pi$.''
\label{dfn:consistency}
\end{definition}

\paragraph{Global consistency.} We have so far quantified consistency at a specific instance. It is also of interest to measure the consistency of the \emph{entire} model. 

\begin{definition}[global consistency]
The global consistency of explanation system $(f,e)$, with respect to distribution $\mu$ over $\cX$, is  
$$m^c = \E_{x \in_\mu \cX} [m^c(x)].$$
\end{definition}

\paragraph{Relation to decoding.} The definition of global consistency implicitly defines a decoder $d$ from explanations $\pi$ to labels $y$. Recall that $C_\pi \subseteq \cX$ is the set of all instances that get assigned explanation $\pi$. These instances might not all have the same predicted label, but we can look at the distribution over labels,
$$ \Pr(y|\pi) = \Pr_{x \in_\mu C_\pi}(f(x) = y) .$$
With this in place, there are two natural ways to define the decoder: (i) the (randomized) \emph{Gibbs decoder} that, given explanation $\pi$, returns a label $y$ with probability $\Pr(y|\pi),$ and (ii) the optimal \emph{deterministic decoder} that returns the label $y$ that maximizes  $\Pr(y|\pi).$  We can denote the resulting decoding error, $\Pr_x(f(x)\neq d(e(x)))$, by $E_G$ for the Gibbs decoder and $E_O$ for the deterministic decoder. Standard manipulations show that these two errors are very similar:
\begin{claim}\label{clm:global_consis_gibbs_detr}
$E_O\leq E_G \leq 2E_O.$
\end{claim}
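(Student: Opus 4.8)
The plan is to work explanation-by-explanation and reduce the claim to a one-dimensional fact about probability distributions. Fix an explanation $\pi \in \cE$ and let $p_\pi = \Pr_\mu(e(x) = \pi)$ be its total mass under $\mu$. Conditioned on $e(x) = \pi$, write $q_\pi(y) = \Pr(y \mid \pi)$ for the induced distribution over labels, and let $y^*_\pi = \argmax_y q_\pi(y)$ be the plurality label. Then the deterministic decoder $d$ that the definition of global consistency implicitly uses returns $y^*_\pi$, so its conditional error on this explanation is $1 - q_\pi(y^*_\pi) = 1 - \max_y q_\pi(y)$. The Gibbs decoder instead returns label $y$ with probability $q_\pi(y)$, so its conditional error (averaging over both the randomness of $f(x)$ and the randomness of the decoder) is $\sum_y q_\pi(y)(1 - q_\pi(y)) = 1 - \sum_y q_\pi(y)^2$. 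Both $E_O$ and $E_G$ are then the $\mu$-averages of these conditional errors, $E_O = \sum_\pi p_\pi (1 - \max_y q_\pi(y))$ and $E_G = \sum_\pi p_\pi (1 - \sum_y q_\pi(y)^2)$.

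First I would establish the pointwise inequalities $1 - \max_y q_\pi(y) \le 1 - \sum_y q_\pi(y)^2 \le 2(1 - \max_y q_\pi(y))$ for every fixed distribution $q_\pi$. The left inequality is immediate since $\sum_y q_\pi(y)^2 \le (\max_y q_\pi(y)) \sum_y q_\pi(y) = \max_y q_\pi(y)$. For the right inequality, let $a = \max_y q_\pi(y)$; then $\sum_y q_\pi(y)^2 \ge a^2$, so $1 - \sum_y q_\pi(y)^2 \le 1 - a^2 = (1-a)(1+a) \le 2(1-a)$, using $1 + a \le 2$. Taking the $p_\pi$-weighted sum over all $\pi$ preserves both inequalities, giving $E_O \le E_G \le 2 E_O$ directly.

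The only mild subtlety — and the step I would be most careful about — is the bookkeeping that both $E_O$ and $E_G$ genuinely decompose as these $\mu$-weighted averages over explanations: this uses the law of total probability over the event $\{e(x) = \pi\}$, together with the independence of the Gibbs decoder's internal coin from $x$ given $\pi$, so that its per-$\pi$ error really is $\sum_y q_\pi(y)(1 - q_\pi(y))$ and not something entangled with $f$. Once that decomposition is written down, everything else is the elementary two-line estimate above. I would also note in passing that the constant $2$ is tight (achieved in the limit as some $q_\pi$ approaches a point mass split into two nearly-equal atoms is the wrong direction — rather it is approached when $a \to 0$, e.g. uniform over a large label set), though the claim as stated does not require tightness.
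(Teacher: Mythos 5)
Your proof is correct and follows essentially the same route as the paper's: both decompose $E_O$ and $E_G$ as $\mu$-weighted averages over explanations $\pi$ of $1-\max_y q_\pi(y)$ and $1-\sum_y q_\pi(y)^2$ respectively, prove the left inequality via $\sum_y q_\pi(y)^2 \le \max_y q_\pi(y)\sum_y q_\pi(y)$, and prove the right inequality by an equally elementary bound (you use $1-\sum_y q_\pi(y)^2 \le 1-a^2 \le 2(1-a)$ where the paper splits off the top term; both work). One small correction to your parenthetical aside: the constant $2$ is approached as $q_\pi$ tends to a point mass (e.g.\ $q_\pi=(1-\delta,\delta)$ with $\delta\to 0$ gives ratio $2(1-\delta)\to 2$), not in the $a\to 0$ uniform limit, where the ratio of the two conditional errors is exactly $1$; this does not affect the validity of the proof, which does not need tightness.
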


Our notion of consistency is exactly the accuracy of the Gibbs decoder: $m^c = 1-E_G$.

\subsection{Sufficiency} \label{sec:sufficiency}

A complementary requirement from an explainer is that if a property $\pi$ is used to justify the prediction at instance $x$, then any other instance $x'$ with property $\pi$ should also be classified the same way. Moreover, this should hold even if the supplied explanation, $e(x')$, is different from $\pi$. 
In the earlier biome example, if the explanation ``contains a zebra'' is ever used to justify a prediction of {\tt savannah}, then any picture with a zebra in it should get the same prediction, even if assigned some other property as explanation.

To start with, we say that explanations $\cE$ are \emph{intelligible} if for any instance $x \in \cX$ and property $\pi \in \cE$, it is possible to assess whether $\pi$ applies to $x$. If so, we define this as a relation $A(x,\pi)$. Ideally, the relation would not only be well-defined but would also be checkable by humans.
Note that the relation depends solely on the instance and not on the true or predicted label. 
We define the set of instances $C_x$ that share the same property as $x$'s explanation by $$C_x =\{ x' \in \cX: A(x',e(x)) \}.$$ 
As with the consistency measure, each instance can have a different level of sufficiency; it is not a binary value. To define this measure, we use a probability distribution over $C_x$. 
The sufficiency measure tests the homogeneity of predictions made on $C_x$.
\begin{definition}[local sufficiency]  The local sufficiency of explainer $e$ for model $f$ at instance $x$, with respect to distribution $\mu$, is defined as 
$$m^s(x) = \Pr_{x'\in_\mu C_x}(f(x') = f(x)).$$
Recall that the notation $x' \in_\mu C_x$ means ``$x'$ is drawn from distribution $\mu$ restricted to the set $C_x$''.
\label{dfn:sufficiency}
\end{definition}

Consistency and sufficiency are complementary measures.
For any given instance $x$, $m^c(x)$ can be larger, smaller, or equal to $m^s(x).$ 
Similarly to global consistency, we define a sufficiency measure for the entire model.  
\begin{definition}[Global sufficiency]
The global sufficiency of explanation system $(f,e)$, with respect to distribution $\mu$ on $\cX$, is equal to $m^s = \E_{x\in_\mu\cX}[m^s(x)].$
\end{definition}

\section{Analysis of common explanation systems}

In this section we review some popular explanation methods and assess the extent to which they achieve consistency and sufficiency. We divide these methods into three sub-categories: explicitly scoped rules, feature importance scores, and example-based explanations.

\subsection{Explicitly scoped rules}

In ``scoped rules", each explanation is an explicit region of the instance space, e.g., ``$(x_2 > 0.5) \wedge (x_4 = \mbox{true}) \wedge (x_{10} < -1)$''. This type of explanation includes decision trees, anchors, and highlighted text, which we elaborate on next.

\subsubsection{Decision trees}\label{sec:decision-trees}

Suppose the instance space is some $\cX \subseteq \R^d$. When a decision tree is used to ``explain'' a classifier $f: \cX \to \cY$, the tree is fit to $f$'s predictions \cite{dasgupta2020explainable,NEURIPS2019_ac52c626,moshkovitz2021connecting}. The explanation of an instance $x$ is the conjunction of the features along the path from $T$'s root to the leaf in which $x$ lies. Thus the explanations, $\cE$, are in one-to-one correspondence with the leaves of the tree.

In this case, an explanation $\pi$ applies to an instance $x$ if and only if $x$ falls in $\pi$'s leaf. Therefore, the relation $A(x,\pi)$ is intelligible (well-defined) and easy for a human to assess. Moreover, consistency is equal to sufficiency, and they measure the accuracy of the tree in capturing $f$:
\begin{align}
m^c &= m^s 
= \Pr_{X,X' \sim \mu} \left( f(X) = f(X') | e(X) = e(X') \right)  \nonumber  \\
&= \sum_{\mbox{leaves $\pi$}} \mu(C_\pi) \Pr(f(X) = f(X')|X,X' \in C_\pi)  \nonumber \\
&= 1 - \sum_{\pi} \mu(C_\pi) (\mbox{\rm Gini-index of $f$ in }C_\pi)  \nonumber
\end{align}
where $C_\pi$ is the subset of $\cX$ that ends up in leaf $\pi$.

\subsubsection{Anchors}\label{sec:anchors}

Pick any data space $\cX \subseteq \R^d$ and prediction function $f: \cX \to \cY$. An \emph{anchor} explanation \cite{ribeiro2018anchors} for an instance $x \in \cX$ is an explicitly-specified hyperrectangle $H_x \subset \R^d$ that contains $x$ and that is meant to correspond, roughly, to a region around $x$ that is similarly labeled. 

The quality of an anchor is typically formalized using the notion of \emph{precision}, which is the probability, over the distribution $\mu$, that a random instance in $H_x$ has label $f(x)$, that is, $\Pr_{x'\in_\mu H_x}[f(x') = f(x)]$.

In this case, the space of explanations is the set of all anchor-hyperrectangles, $\cE = \{H_x: x \in \cX\}$. It is easy to check whether an anchor applies to an instance: the relation $A(x, H) \equiv (x \in H)$ is well-defined. Moreover, our notion of local sufficiency is exactly the precision of anchors and global sufficiency is exactly the average precision of anchors:
\begin{align*}
m^s &= \Pr_{X,X' \sim \mu}(f(X') = f(X)|A(X', e(X)))\\
&= \E_{X \sim \mu}[\mbox{precision}(H_X)] .
\end{align*}

If anchors are chosen to be discrete---that is, the same hyperrectangles are used many times---then our notion of consistency gauges the uniformity of prediction over all instances for which a particular anchor is specified:
$$ m^c = \Pr_{X, X' \sim \mu}(f(X') = f(X)|H_{X'} = H_X) .$$
There is no immediate relation between this and sufficiency or precision.

\subsubsection{Highlighted text}

The goal in \emph{highlighted text} explanations
is to pick out the features---for instance, words in text---that are most important for a model's prediction \cite{jacovi2021aligning}. For an instance $x \in \mathbb{R}^d$, the explanation can be thought of as a subset of features $S\subseteq [d]$, and the values (e.g., text) of these features, $x_S \in \mathbb{R}^{|S|}$.

These explanations are anchors at the level of generality of Section~\ref{sec:anchors}. Thus the same observations apply here.

\subsubsection{A unified framework for explicitly scoped rules}\label{sec:scoped-rules}

The last three examples---decision trees, anchors, and highlighted text---have a common structure that is appealingly simple and may also hold for many future explanation systems. To formalize it, we say an explanation system $(f,e)$ has \emph{explicitly scoped rules} if each explanation $\pi$ is a description of a region $S_\pi \subseteq \cX$ of the instance space. For a given point $x$, the explanation $\pi = e(x)$ has the property that $x \in S_\pi$. The terminology ``explicitly scoped'' means that subset $S_\pi$ is specified in a form where it is easy to check whether a specific point lies in it or not. Thus the set of explanations is $\cE = \{e(x): x \in \cX\}$ and the relation $A(x,\pi) \equiv (x \in S_\pi)$ is well-defined (intelligible). This is the key property of explicitly-scoped rules.

We can generalize the notion of precision to any region of space (not just hyperrectangles) and as in the case of anchors, sufficiency will then correspond to average precision. 

For the explanation systems we will cover next, intelligibility---determining whether an explanation applies to a given instance---is more tricky.

\subsection{Feature importance methods}

\emph{Feature importance} methods aim to give a precise indication of which features of an input $x$ are most relevant to the prediction $f(x)$. This often takes the form of a local linear model $g_x$ (sometimes on a simplified instance space) that approximates $f$ in the vicinity of $x$. However, the scope of this $g_x$---the region over which the approximation is accurate---is sometimes unspecified, in which case it is unclear when a particular $g_x$ can be thought of as being applicable to some other point $x'$. Because of the ambiguity in the intelligibility of these explanations, we will focus on consistency in what follows.

\subsubsection{LIME}

LIME \cite{ribeiro2016should} provides an explanation of $f(x)$ by (1) using an \emph{interpretable representation} $\psi:\cX\rightarrow\cX'$, e.g. the presence or absence of individual words in a document, and (2) approximating $f$ near $x$ with a simple model $g_x: \cX' \to \cY$. Typically, $g_x$ is a linear classifier.

LIME does not exhibit perfect consistency, e.g., points $x$ with the same interpretable representation get assigned the same $g_x$, while their predicted labels may vary. Another example is depicted in Appendix~\ref{apx:lime_no_perfect_consistency}.

\subsubsection{SHAP}

SHAP \cite{lundberg2017unified} is similar in spirit to LIME. It uses a Boolean feature space $\cX'$ and its explanations are linear functions $g_x: \cX' \to \cY$. But this time the choice of $g_x$ is inspired by Shapley values \cite{shapley1953value} from game theory, and is chosen to satisfy four axioms for fair distribution of gains: efficiency, symmetry, linearity, and null player. In particular, the coefficients of $g_x$ are guaranteed to sum to $f(x) -\phi_0$, where $\phi_0$ is constant for all $x$.

This last property guarantees that if two examples have the same explanation, then their label must be the same, thus ensuring perfect consistency. 

\subsubsection{Gradient-based method}

Gradient-based explanations are popular for neural nets \cite{agarwal2021towards,ancona2017towards,shrikumar2016not,simonyan2013deep,smilkov2017smoothgrad}. The explanation is the gradient of the network with respect to the instance, the intuition being that features with highest gradient values have the most influence on the model's output.

The gradient alone determines a function only up to an additive constant. This offset must also be provided to complete the explanation; otherwise there is imperfect consistency. The lack of decodability was empirically observed in several previous works \cite{adebayo2018sanity,anders2020fairwashing,kim2021machine,nie2018theoretical,wang2020gradient}.

\subsection{Example-based explanations}

An example-based explainer justifies the prediction on an instance $x$ by returning instances related to $x$. Explanations of this type include \emph{nearest neighbors} and \emph{counterfactuals}. 

\subsubsection{Nearest neighbors}

Let's focus on 1-nearest neighbor for concreteness. For a given prediction function $f: \cX \to \cY$, a nearest neighbor explanation system maintains a set of prototypical instances $\mathcal{P} \subseteq \cX$ and justifies the prediction on instance $x$ by returning a prototype $p \in \mathcal{P}$ close to $x$ (with respect to an underlying distance function $d$ on $\cX$). Thus the space of explanations is $\cE = \mathcal{P}$.

Our consistency measure then checks the extent to which points $x,x'$ that get mapped to the same prototype $p \in \mathcal{P}$ also get the same prediction under $f$. 

For sufficiency, we also need to define the relation $A(x,p)$: when do we consider prototype $p$ to be ``applicable to'' instance $x$? Here are two options.
\begin{enumerate}
\item When $p$ is the nearest neighbor of $x$ in $\mathcal{P}$.
\item When $d(x,p) \leq \tau$ for some threshold $\tau > 0$.
\end{enumerate}
The first option strictly follows the nearest neighbor rule, but leads to problems with verifiability; for instance, it is not easy for a human to check that $A(x,p)$ holds unless the set $\mathcal{P}$ is somehow available. The second option is easier to check; in fact, we can treat the regions $B(p,\tau)$ as anchors and then measure consistency and sufficiency using the methods of Section~\ref{sec:scoped-rules}. 

\subsubsection{Counterfactuals}

A counterfactual explanation of an instance $x$ is another instance $x'$ which is close to $x$ but has a different label, $f(x) \neq f(x')$ \cite{ deutch2019constraints,mothilal2020dice,slack2021counterfactual}. To make this concrete, suppose we are performing binary classification and that some distance function $d$ has been chosen for the instance space $\cX$. Then the counterfactual explanation for $x$ is the closest point $x'$ that gets the opposite label, that is, $x'=\argmin_{x': f(x')\neq f(x)} d(x,x')$. The space of explanations is $\cE = \cX$.

In this case, the ``explanation'' $x'$ gives information about the nature of predictions in the vicinity of $x$. Specifically, it asserts that \emph{any point in the open ball $B(x, d(x,x'))$ has label $f(x)$}. Therefore, one way to verify faithfulness of these explanations is simply to associate them with scoped rules of this form and to then assess sufficiency as in Section~\ref{sec:scoped-rules}.

\section{Evaluating faithfulness of black-box model}\label{sec:evaluating_faithfulness_model}

In this section, we consider a scenario where we are given a black-box explanation system $(f,e)$ and wish to evaluate its faithfulness. To this end, we develop statistical estimators for consistency and sufficiency given samples $x_1, \ldots, x_n$ from an underlying test distribution $\mu$. How many such samples are needed to accurately assess faithfulness?

\subsection{Discrete explanation spaces}

Let's begin with the case where the explanation space $\cE$ is discrete (that is, countable). We will not assume that we know the entire set $\cE$, since this knowledge will not be in general available for a black-box explanation system. Given a few samples from $\mu$, we can look at the resulting explanations and predictions, but it is not trivial to assess the fraction of the explanation space that we have not seen: that is, the \emph{missing mass}. And for any explanation $\pi$ that we do not see, faithfulness could be arbitrarily bad. With this difficulty in mind, we now turn to our estimators.

A key observation is that although consistency and sufficiency measure different aspects of the explanation system, for the purposes of statistical estimation they can be treated together. To see this, let $R(x,\pi)$ denote an arbitrary relation on $\cX \times \cE$, and for a given distribution $\mu$ on $\cX$, define
$$ m^{R}_\mu = \Pr_{X,X' \sim \mu} \left( f(X') = f(X) | R(X', e(X)) \right) .$$
This generalizes both types of faithfulness: for consistency, take $R(x,\pi)$ to mean $e(x) = \pi$ and for sufficiency take $R(x,\pi) \equiv A(x,\pi)$.

Thus we only need an estimator for $m^R_\mu$. The quality of our estimate will depend upon what fraction of the explanation space we get to see, which in turn depends on $\cE$ and $\mu$. 

We begin with a few related definitions. Let $p(\pi)$ be the fraction of points for which explanation $\pi$ is provided, that is, $p(\pi) = \mu(\{x: e(x) = \pi\})$ and let $q(\pi)$ be the fraction for which $R(x,\pi)$ holds: $q(\pi) = \mu(\{x: R(x,\pi)\})$. Thus $p(\pi)$ is a distribution over $\cE$ while $q(\pi)\in [0,1]$ and $q(\pi) \geq p(\pi)$.

Given samples $x_1, \ldots, x_n \sim \mu$, and any $y, \pi$, define
\begin{align*}
    N_\pi &= |\{i: R(x_i, \pi)\}| \\
    N_{\pi,y} &= |\{i: R(x_i, \pi), f(x_i) = y\}|
\end{align*}
Our estimator for $m^R_\mu$ is then
$$ \widehat{M} = \frac{1}{n} \sum_{i=1}^n {\bf 1}(N_{e(x_i)} > 1) \, \frac{N_{e(x_i), f(x_i)} - 1}{N_{e(x_i)} - 1} .$$

We can show the following rate of convergence.
\begin{theorem}
The mean-squared error of estimator $\widehat{M}$ can be bounded as follows:
$$ \E \left[ (\widehat{M} - m^R_\mu)^2 \right] \ \leq \ \frac{4}{n} + \left( \sum_\pi p(\pi) e^{-(n-1)q(\pi)} \right)^2 .$$ 
\label{thm:rate-of-convergence}
\end{theorem}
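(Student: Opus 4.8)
The plan is to use the bias--variance decomposition
$\E[(\widehat M - m^R_\mu)^2] = \mathrm{Var}(\widehat M) + (m^R_\mu - \E[\widehat M])^2$
and to prove separately that (i) $0 \le m^R_\mu - \E[\widehat M] \le \sum_\pi p(\pi)\,e^{-(n-1)q(\pi)}$ and (ii) $\mathrm{Var}(\widehat M) \le 4/n$; combining the two yields the theorem.

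\smallpar{Bias (the easy half)} Fix an index $i$ and condition on $x_i$; write $\pi = e(x_i)$ and $y = f(x_i)$. The remaining $n-1$ samples are i.i.d.\ from $\mu$, and the number of them for which $R(\cdot,\pi)$ holds is $\mathrm{Binomial}(n-1,q(\pi))$. Conditioned on that number being some value $k \ge 1$, the corresponding points are a uniform i.i.d.\ sample from $\mu$ restricted to $\{x' : R(x',\pi)\}$, so the fraction of them with label $y$ has conditional expectation exactly $\Pr_{X'\sim\mu}(f(X') = y \mid R(X',\pi))$; here I use that $R(x_i,e(x_i))$ holds, which is precisely what the ``$-1$'' corrections in $\widehat M$ account for. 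Hence the $i$-th summand of $\widehat M$ has conditional expectation $\bigl(1 - (1-q(\pi))^{n-1}\bigr)\cdot\Pr_{X'\sim\mu}(f(X')=y\mid R(X',\pi))$. Averaging over $x_i \sim \mu$, the leading factor recovers $m^R_\mu$ and the correction equals $\E_{x_i}\bigl[(1-q(\pi))^{n-1}\,\Pr_{X'}(f(X')=f(x_i)\mid R(X',\pi))\bigr]$, which lies between $0$ and $\sum_\pi p(\pi)(1-q(\pi))^{n-1} \le \sum_\pi p(\pi)\,e^{-(n-1)q(\pi)}$ because the inner probability is in $[0,1]$. This is claim (i).

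\smallpar{Variance (the main obstacle)} The summands of $\widehat M$ are coupled through the shared sample, so a plain bounded-differences argument is too weak: resampling one point can perturb many summands. The plan is to first remove the coupling by conditioning on the ``configuration'' $\mathcal{G}$ that records all explanations $e(x_1),\dots,e(x_n)$ and all membership bits $\mathbf{1}(R(x_j,e(x_i)))$, and then split $\mathrm{Var}(\widehat M) = \E[\mathrm{Var}(\widehat M\mid\mathcal{G})] + \mathrm{Var}(\E[\widehat M\mid\mathcal{G}])$. Given $\mathcal{G}$, the points $x_1,\dots,x_n$ become independent and the counts $N_\pi$ are fixed, and a short calculation rewrites $\widehat M$ as a nonnegative quadratic form in the now-independent labels, $\widehat M = \sum_{i,j}a_{ij}\,\mathbf{1}(f(x_i)=f(x_j))$, whose coefficients are $\mathcal{G}$-measurable: $a_{ii}=0$ and $a_{ij} = \mathbf{1}(N_{e(x_i)}>1)\,\mathbf{1}(R(x_j,e(x_i)))\,\big/\,\bigl(n\,(N_{e(x_i)}-1)\bigr)$. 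Two identities tame most of the analysis: $\sum_j a_{ij} = \mathbf{1}(N_{e(x_i)}>1)/n \le 1/n$, and $\sum_{i,j}a_{ij}^2 \le \tfrac{1}{n^2}\sum_i \mathbf{1}(N_{e(x_i)}>1)/(N_{e(x_i)}-1) \le 1/n$. Expanding $\mathrm{Var}(\widehat M\mid\mathcal{G})$ over pairs of pairs, all covariances between $\mathbf{1}(f(x_i)=f(x_j))$ and $\mathbf{1}(f(x_k)=f(x_l))$ with $\{i,j\}\cap\{k,l\}=\emptyset$ vanish by independence; the ``diagonal'' terms and the ``one-shared-index'' terms whose inner sum runs over the second index of $a$ are $O(1/n)$ by the two identities; and $\mathrm{Var}(\E[\widehat M\mid\mathcal{G}])$ is handled by an analogous, easier expansion.

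\smallpar{The hard term} The one contribution that does not follow from the identities above is $\sum_j\bigl(\sum_i a_{ij}\bigr)^2$, the one-shared-index term whose inner sum runs over the \emph{first} index of $a$: there is no telescoping, and $\sum_i a_{ij}$ can genuinely be of constant order for some $j$. The plan is to bound its expectation by exploiting the structural tension between the two ways a summand can be large. A factor $1/(N_{e(x_i)}-1)$ is large only when $e(x_i)$ has small population $q(e(x_i))$, but an explanation with small $q$ is realized by correspondingly few of the $n$ samples, so $\sum_i a_{ij} = \tfrac1n\sum_{\pi\,:\,R(x_j,\pi)} (\#\{i : e(x_i)=\pi\})/(N_\pi-1)$ stays controlled once each count $N_\pi$ concentrates around $n\,q(\pi)$ (a Chernoff bound per $\pi$, then summed, with the tiny-count regime treated separately since it also forces a large bias term). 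Carrying this out gives $\E\bigl[\sum_j(\sum_i a_{ij})^2\bigr] = O(1/n)$, and accounting for the remaining constants yields $\mathrm{Var}(\widehat M) \le 4/n$, which together with claim (i) proves the stated bound. This concentration-versus-realization trade-off is the crux of the argument.
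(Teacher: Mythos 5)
Your bias analysis is correct and is essentially identical to the paper's: condition on $x_i$, note that $N_{e(x_i)}-1$ is $\mathrm{Binomial}(n-1,q(\pi))$, that conditioned on $N_{e(x_i)}=k+1>1$ the ratio $(N_{\pi,y}-1)/(N_\pi-1)$ has expectation exactly $q(y\mid\pi)=\Pr_{X'}(f(X')=y\mid R(X',\pi))$, and hence $\E[\widehat M]=\E_X[(1-(1-q(e(X)))^{n-1})\,q(f(X)\mid e(X))]$, giving a bias of at most $\sum_\pi p(\pi)e^{-(n-1)q(\pi)}$. The divergence is entirely in the variance term, and there your proposal both rejects an argument that actually works and substitutes one that does not close. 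The paper's proof is a one-liner: changing a single sample $x_j\to x_j'$ changes $\widehat M$ by at most $4/n$, so Efron--Stein (bounded differences) gives $\mathrm{Var}(\widehat M)\le\tfrac14\,n\,(4/n)^2=4/n$. Your stated reason for abandoning this --- ``resampling one point can perturb many summands'' --- overlooks a telescoping you yourself essentially wrote down in the identity $\sum_j a_{ij}\le 1/n$: in the consistency instantiation the summands perturbed by resampling $x_j$ are precisely the $N_{e(x_j)}-1$ indices sharing $x_j$'s explanation, and each such term $(N_{\pi,y_i}-1)/(N_\pi-1)$ moves by at most about $2/(N_\pi-2)$, so the total perturbation of $\widehat M$ is $O(1/n)$ regardless of how many summands are touched.

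The replacement argument has two concrete gaps. First, the conditional-independence claim underlying your decomposition is false for a general relation $R$: conditioning on the pairwise membership bits $\mathbf{1}(R(x_j,e(x_i)))$ is conditioning on joint functions of pairs of samples, which couples $x_1,\dots,x_n$; it only happens to preserve independence in the consistency case, where the bits are determined by the individual explanations. Second, the ``hard term'' $\sum_j(\sum_i a_{ij})^2$ is exactly where the difficulty lives and your treatment of it is a plan, not a proof: the Chernoff-per-$\pi$ concentration is not carried out, the proposal to handle the tiny-count regime by noting that ``it also forces a large bias term'' cannot yield the theorem as stated (the theorem asserts an unconditional additive $4/n$, not a trade-off against the squared bias), and the final constant $4/n$ is asserted rather than derived. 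If you want a complete proof, either carry out the bounded-differences computation above (which is what the paper does), or, for a general $R$ where the worst-case single-coordinate perturbation can be large, use the expectation form of Efron--Stein, $\mathrm{Var}(\widehat M)\le\tfrac12\sum_j\E[(\widehat M-\widehat M^{(j)})^2]$, and show the expected squared perturbation is $O(1/n^2)$; both routes are far shorter than the machinery you set up.
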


The mean-squared error is the sum of the variance, which is bounded by $4/n$, and the squared bias, the term in parentheses. This bias arises from the inability to correctly assess faithfulness for explanations $\pi$ that appear $0$ or $1$ times in the data. One way to make this term small, say $<\epsilon$, is to have $n$ comparable to the size of $\mathtt{range}(e)$.
In this way, we see that the ease of evaluating the faithfulness of explanations depends on the level of \emph{compression} they achieve.

\begin{corollary}
Suppose the unlabeled sample size is at least $n\geq \mathtt{range}(e)\cdot\frac{12}\epsilon\log\frac3\epsilon$. Then, the mean-squared error of $\widehat{M}$ (for either consistency or sufficiency) is at most $\epsilon$.
\label{cor:estimator-sample-complexity}
\end{corollary}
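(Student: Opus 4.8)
The plan is to substitute the hypothesized sample size into the bound of Theorem~\ref{thm:rate-of-convergence} and show separately that the variance term $4/n$ and the squared bias term $\big(\sum_\pi p(\pi)e^{-(n-1)q(\pi)}\big)^2$ are each at most $\epsilon/2$. Write $k=|\mathtt{range}(e)|$ for the number of explanations actually in use, so that the hypothesis reads $n\ge k\cdot\frac{12}{\epsilon}\log\frac3\epsilon$, and note $k\ge 1$.

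The variance term is immediate: since $k\ge 1$ and $\log\frac3\epsilon\ge\log 3\ge\tfrac23$ for $\epsilon\in(0,1)$, the hypothesis forces $n\ge 8/\epsilon$, hence $4/n\le\epsilon/2$.

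The real work is the bias term; I would bound $B:=\sum_\pi p(\pi)e^{-(n-1)q(\pi)}$ by $\sqrt{\epsilon/2}$. Two structural facts make this possible. First, $p(\pi)=0$ whenever $\pi\notin\mathtt{range}(e)$, so $B$ is a sum of at most $k$ terms; restrict attention to those. Second, $q(\pi)\ge p(\pi)$ always. Now fix a threshold $\delta>0$ and split the (at most $k$) active explanations into those with $q(\pi)\ge\delta$ and those with $q(\pi)<\delta$. For the first group, $e^{-(n-1)q(\pi)}\le e^{-(n-1)\delta}$, so their total contribution is at most $e^{-(n-1)\delta}\sum_\pi p(\pi)\le e^{-(n-1)\delta}$. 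For the second group, $p(\pi)\le q(\pi)<\delta$ and there are at most $k$ of them, so their contribution is at most $k\delta$. Hence $B\le e^{-(n-1)\delta}+k\delta$. Choosing $\delta=\Theta(\sqrt\epsilon/k)$ makes $k\delta=\Theta(\sqrt\epsilon)$, and then $e^{-(n-1)\delta}=O(\sqrt\epsilon)$ as soon as $(n-1)\delta\gtrsim\log\frac1\epsilon$, i.e.\ $n\gtrsim\frac{k}{\sqrt\epsilon}\log\frac1\epsilon$; since $\frac1{\sqrt\epsilon}\le\frac1\epsilon$ on $(0,1)$, the hypothesized $n\ge k\cdot\frac{12}{\epsilon}\log\frac3\epsilon$ comfortably exceeds this, with room to spare in the constants. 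Therefore $B\le\sqrt{\epsilon/2}$, so $B^2\le\epsilon/2$, and adding the two pieces yields $\mathbb{E}[(\widehat M-m^R_\mu)^2]\le\epsilon$. (One can avoid the threshold altogether: using $q(\pi)\ge p(\pi)$ and the elementary inequality $t\,e^{-(n-1)t}\le\frac{1}{e(n-1)}$ for $t\ge 0$ term by term gives $B\le\frac{k}{e(n-1)}$, which is even smaller; either route suffices for the stated bound.)

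The only genuine obstacle is the bias term, and within it the single non-obvious point is the control of the ``rare'' explanations: because there are at most $k$ distinct explanations in total, the ones with $q(\pi)<\delta$ together carry $p$-mass less than $k\delta$, which is precisely what permits $\delta$ to be taken as small as $\sqrt\epsilon/k$ without the exponential term exploding. Everything else is routine constant-chasing, for which one need only keep $\epsilon\in(0,1)$ in mind so that $\log\frac3\epsilon$ and $1/\sqrt\epsilon$ behave as used.
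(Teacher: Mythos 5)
Your proposal is correct and follows the route the paper intends: the paper omits an explicit proof of this corollary, but its discussion after Theorem~\ref{thm:rate-of-convergence} prescribes exactly your decomposition, namely bounding the variance term $4/n$ and the squared-bias term separately, with the bias controlled because the sum $\sum_\pi p(\pi)e^{-(n-1)q(\pi)}$ has at most $|\mathtt{range}(e)|$ nonzero terms and $q(\pi)\geq p(\pi)$. Both your threshold-splitting argument and the one-line alternative via $t\,e^{-(n-1)t}\leq \frac{1}{e(n-1)}$ are valid, and the hypothesized $n$ leaves ample slack in the constants.
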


\subsection{Larger or continuous explanation spaces}

The estimator of the previous section needs explanations to appear at least twice before it can begin assessing their faithfulness. This is problematic in continuous explanation spaces, where no explanation might ever be repeated.

One fix, which we later study empirically, is to discretize the space $\cE$. We introduce a function $\psi: \cE \to \cE'$ where $\cE'$ is much smaller than $\cE$, and consider explanations $\pi, \pi'$ to be equivalent if $\psi(\pi) = \psi(\pi')$. An alternative fix is to introduce a distance function $d$ between explanations, and to use a $d(\pi, \pi') \leq \tau$ to determine when $\pi$ and $\pi'$ are close enough that they should yield the same prediction.

Explainers that their inner-working is known, their consistency and sufficiency might also be known (e.g., SHAP has perfect consistency). However, the new measures need to be estimated if the inner working is unknown. This section provided conditions where such an estimation is possible. Unfortunately, there are some cases where it is impossible to apply \emph{any} estimation method. One such scenario is where all the explanations are distinct, as the next claim shows.

\begin{restatable}{claim}{clmUnverifiable}(unverifiable explainer)\label{clm:unverifiable_explainer}
Fix infinite example set $\cX$. There are two explainers, $e_1$ and $e_2$, a model $f$, and a distribution over the examples, where on every finite-sample, with probability $1,$ the explanations are the same, but the sufficiency and consistency of $e_1$ is $1$ while the sufficiency and consistency of $e_2$ is $0.5$. 
\end{restatable}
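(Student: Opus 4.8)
The plan is to construct an explicit example on a countably infinite instance space $\cX = \{1, 2, 3, \dots\}$ with a distribution $\mu$ that is spread thinly enough that, with probability $1$, a finite i.i.d. sample never contains the same point twice. The cleanest choice is to put $\mu$ on a countably infinite set with no atoms of positive "collision probability" — e.g., take $\mu(i) = 2^{-i}$; then the probability that an $n$-sample has a repeated index is a fixed number less than $1$, which is not good enough. So instead I would let the sampled instances themselves be drawn from a continuum-sized copy, or more simply: work with $\cX = [0,1]$, $\mu = $ Lebesgue measure, and an explainer whose domain of distinct explanations is all of $[0,1]$. Then with probability $1$ all $n$ sampled points are distinct, and since both explainers I construct will be injective (every instance gets its own explanation), on every finite sample \emph{both} explainers output $n$ pairwise-distinct explanations — in fact I can arrange $e_1$ and $e_2$ to produce literally the same tuple of explanations on every sample. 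The model $f$ will be a fixed two-valued function, say $f(x) = \mathbf{1}(x \ge 1/2)$, so $f$ has a $1/2$-$1/2$ label split under $\mu$.

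The key is then to decouple the explanations' \emph{syntactic form} (which is all the sample reveals) from the relation $A(\cdot,\cdot)$ and from the fibers $C_\pi$, which govern consistency and sufficiency. For $e_1$, I would let each explanation $\pi$ be "scoped" to a region that is label-homogeneous: e.g., $e_1(x)$ is an explanation whose associated set $S_{e_1(x)}$ is the entire half-interval containing $x$ (either $[0,1/2)$ or $[1/2,1]$), but tagged with $x$ itself so that the explanation is unique to $x$. Then $C_x = S_{e_1(x)}$ is an $f$-homogeneous set, so $m^s(x) = 1$ for every $x$, giving global sufficiency $1$; and $C_\pi = \{x\}$ is a singleton (since $e_1$ is injective), so $m^c(x) = 1$ trivially, giving consistency $1$. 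For $e_2$, I would keep the \emph{same} injective labeling of explanation tokens — so the sample looks identical — but define the relation $A$ and the scope $S_\pi$ to be all of $[0,1]$ (or, to get exactly $0.5$, a set that is evenly split between the two labels, independent of $x$). Then $C_x = [0,1]$ for every $x$, so $m^s(x) = \Pr_{X\sim\mu}(f(X) = f(x))$, which is $1/2$ for every $x$ since $f$ is balanced; hence global sufficiency is $1/2$. For consistency under $e_2$ I need $C_\pi$ to be a balanced set too; the subtlety is that if $e_2$ is injective then $C_\pi$ is again a singleton and consistency is trivially $1$, not $1/2$. So I would instead make $e_2$ \emph{not} injective on $\cX$ but still injective on any finite sample with probability $1$: partition $[0,1]$ into a countable family of "blocks" $B_1, B_2, \dots$ each of measure zero but each hitting both labels, let $e_2$ be constant on each block, and note that with probability $1$ no two sampled points land in the same measure-zero block. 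Then $C_\pi = B_j$ for the appropriate $j$, which is balanced, so $m^c(x) = 1/2$ for all $x$. (Concretely: $B_j$ can be a two-point set $\{a_j, b_j\}$ with $f(a_j)=0, f(b_j)=1$ — but two-point sets have measure zero and do not carry $\mu$; so instead take $B_j$ to be a scaled Cantor-like set split across $1/2$, or split $[0,1]$ into $[0,1/2)$ and $[1/2,1]$ and within each use a measure-preserving bijection to pair them up into null fibers. A simple version: index $[0,1/2) \cong [1/2,1)$ via $t \mapsto t + 1/2$, and let $e_2(t) = e_2(t+1/2) = $ "block $t$" for $t \in [0,1/2)$, so each fiber is $\{t, t+1/2\}$, measure zero, balanced.)

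With that construction the verification is routine: (i) with probability $1$ an $n$-sample has $n$ distinct points, and since I define the explanation \emph{tokens} of $e_1$ and $e_2$ by the same rule (the token of $x$ is "$x$" itself, or the pair-representative, chosen so the two agree as abstract symbols), the observed explanation sequences coincide; (ii) $e_1$ has $m^s = m^c = 1$ by the homogeneity of its scopes and singleton consistency-fibers; (iii) $e_2$ has $m^s = m^c = 1/2$ by the balancedness of its sufficiency-scopes ($[0,1]$, or a balanced set) and its consistency-fibers (the measure-zero balanced pairs). The main obstacle — and the only place that needs care — is arranging \emph{simultaneously} that (a) the two explainers are syntactically indistinguishable on every finite sample almost surely, (b) $e_1$'s consistency fibers $C_\pi$ are $f$-homogeneous while $e_2$'s are exactly balanced, and (c) the relation $A$ is still "intelligible" in the sense of Section~\ref{sec:sufficiency} (so the example is not vacuous). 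The measure-zero-fiber trick resolves (a)+(b) cleanly; for (c) I would make $A(x,\pi) \equiv (x \in S_\pi)$ with $S_\pi$ an explicitly describable set (a half-interval for $e_1$, all of $[0,1]$ for $e_2$), which is manifestly checkable. I should also double-check the edge case in the estimator $\widehat M$: since every explanation appears exactly once, $N_{e(x_i)} = 1$ always, so the indicator $\mathbf{1}(N_{e(x_i)} > 1)$ kills every term and $\widehat M = 0$ regardless — consistent with the claim that no estimator can distinguish the two cases, and a good sanity check to include.
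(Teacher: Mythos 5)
Your core construction is the same as the paper's: an injective explainer $e_1$ whose fibers are singletons, versus an explainer $e_2$ whose fibers are measure-zero pairs $\{t,\,t+1/2\}$ of oppositely-labeled points, so that a finite i.i.d.\ sample almost surely never contains both members of a pair and the two explainers are observationally identical while having consistency $1$ and $0.5$ respectively. (The paper phrases this as ``partition $\cX$ into pairs $(x_1,x_2)$ with $f(x_1)\neq f(x_2)$''; your $[0,1]$-with-Lebesgue instantiation is in fact more careful than the paper's informal ``uniform distribution over an infinite $\cX$''.)

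The one place you diverge, and where your version has a wrinkle, is sufficiency. The paper simply declares $A(x,\pi)\Leftrightarrow e(x)=\pi$ for \emph{both} explainers, so that $C_x=C_\pi$, sufficiency coincides with consistency, and the single pairing construction delivers both halves of the claim at once. You instead give $e_1$ and $e_2$ genuinely different scopes (half-intervals versus all of $[0,1]$, or a balanced set). This creates a tension you only partially acknowledge: either the scope is part of the explanation token, in which case the explanations of $e_1$ and $e_2$ are \emph{not} the same on the sample and the claim's hypothesis fails; or the scope lives only in the relation $A$, in which case an observer who can evaluate the (intelligible) relation $A(x_i, e(x_j))$ on sampled pairs can distinguish the two systems, defeating the spirit of unverifiability. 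The fix is exactly the paper's move --- define $A$ by explanation equality and let the paired fibers do the work for sufficiency as well --- and you already have every ingredient needed for it, so this is a repairable gap rather than a failed approach. Your closing observation that $\widehat M=0$ identically because $N_{e(x_i)}=1$ for every $i$ is a nice sanity check that the paper does not state explicitly.
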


\section{Experiments}

\subsection{Canonical properties}\label{subsec:canonical_properties}

We begin with experiments that illustrate basic properties of our faithfulness estimators: 
(1) they assign low scores to random explanations, (2) they assign higher scores to more faithful explanations as long as the explanation space is not too large, and (3) when the explanation space is huge relative to the amount of unlabeled data, they conservatively assign low scores since they are unable to assess faithfulness.

\paragraph{Highlighted text.}

To evaluate a variety of \emph{highlighted text} explainers, 
we began by training a predictor on the \polaritydata{} dataset, used for sentiment classification of movie reviews, with 10,433 documents. We represented each document as a bag of words, and used 80\% of the data to train a linear model. The remaining documents were used to compare four highlighted text explainers. 

We evaluated four explainers. (1) \emph{Top Coefficient} is a white-box explainer that highlights the word in the sentence with the highest absolute coefficient in the linear model. (2) \emph{Anchors} \cite{ribeiro2018anchors}. (3) \emph{First Word} always highlights the first word in the sentence as the explanation.
(4) \emph{All Words} highlights all the words in the sentence as the explanation. We estimated global consistency and sufficiency for each explainer as described in previous sections. We also recorded the \emph{uniqueness} of each explainer, which is the fraction of test data whose explanations were unique.

The results are presented in Table~\ref{tbl:word_highlight_sanity}.
Top Coefficient 
got the highest consistency and sufficiency scores,
as one might expect from an explainer that utilizes its complete knowledge of the model. As Anchors is a black-box explainer that attempts to return a faithful explanation, it produces better results than the last two explainers, which are not designed to be faithful to the model.
First Word is close to a random explainer, and thus gets rather low sufficiency and consistency. All Words highlights the entire input and thus has maximal uniqueness ($1.0$), making it unverifiable. Consequently, its consistency and sufficiency estimates are $0.0$, despite the definitions implying a value of $1.0$ for both measures.

\begin{table}[!htb]
    \centering
    \caption{
    The mean$\pm$std of the consistency, sufficiency, and uniqueness measures of the four highlighted text explainers, evaluated over 5 samples of 1000 examples.
    }
    \label{tbl:word_highlight_sanity}
    \begin{tabular}{l l l l}
         \toprule
         Explainer & Consistency & Sufficiency &  Uniqueness\\
         \midrule
         Top Coefficient & \textbf{0.69} $\pm$ 0.01 & \textbf{0.71} $\pm$ 0.01  & 0.5 $\pm$ 0.01 \\
         Anchors & 0.54 $\pm$ 0.01 & 0.61 $\pm$ 0.01 & 0.44 $\pm$ 0.01 \\
         First Word & 0.37 $\pm$ 0.01 & 0.48 $\pm$ 0.01 & 0.39 $\pm$ 0.01 \\
         All Words & 0.0 $\pm$ 0.0 & 0.0 $\pm$ 0.0 & 1.0 $\pm$ 0.0 \\
         \bottomrule
    \end{tabular}
\end{table}

\paragraph{Decision trees.}
Next, we used decision trees to study the relationship between the size of the explanation space and the number of samples needed for accurate estimation of faithfulness. Each of our prediction models was a decision tree, and the same tree was used for explanations, implying perfect consistency and sufficiency.
We learned six trees of different sizes ($2^n$ leaves, for $n = 6, 7, \ldots, 11$) on the \adult{} dataset~\cite{kohavi1996scaling}, using $66.6\%$ of the examples for training. From the remaining $33.3\%$ of the examples we varied the number of sampled records used to estimate consistency/sufficiency (the two estimates are identical in this setting).

The results appear in Figure~\ref{fig:main_adult_decision_tree}.
For the smallest tree (64 leaves), the global estimator is accurate even with very few samples. However, as the size of the tree grows, there are more possible explanations (root-to-leaf paths), which increases the sample complexity of the estimation task. For example, the largest configuration (2048 leaves) requires 4,300 samples to reach even a 0.9 estimate of sufficiency and consistency. Similar trends were observed for different datasets and when k-nearest neighbors was used as both the model and explainer (Appendix~\ref{sec:appendix_sample_complexity}).

\begin{figure}[!htb]
    \centering
    \includegraphics[width=.8\linewidth]{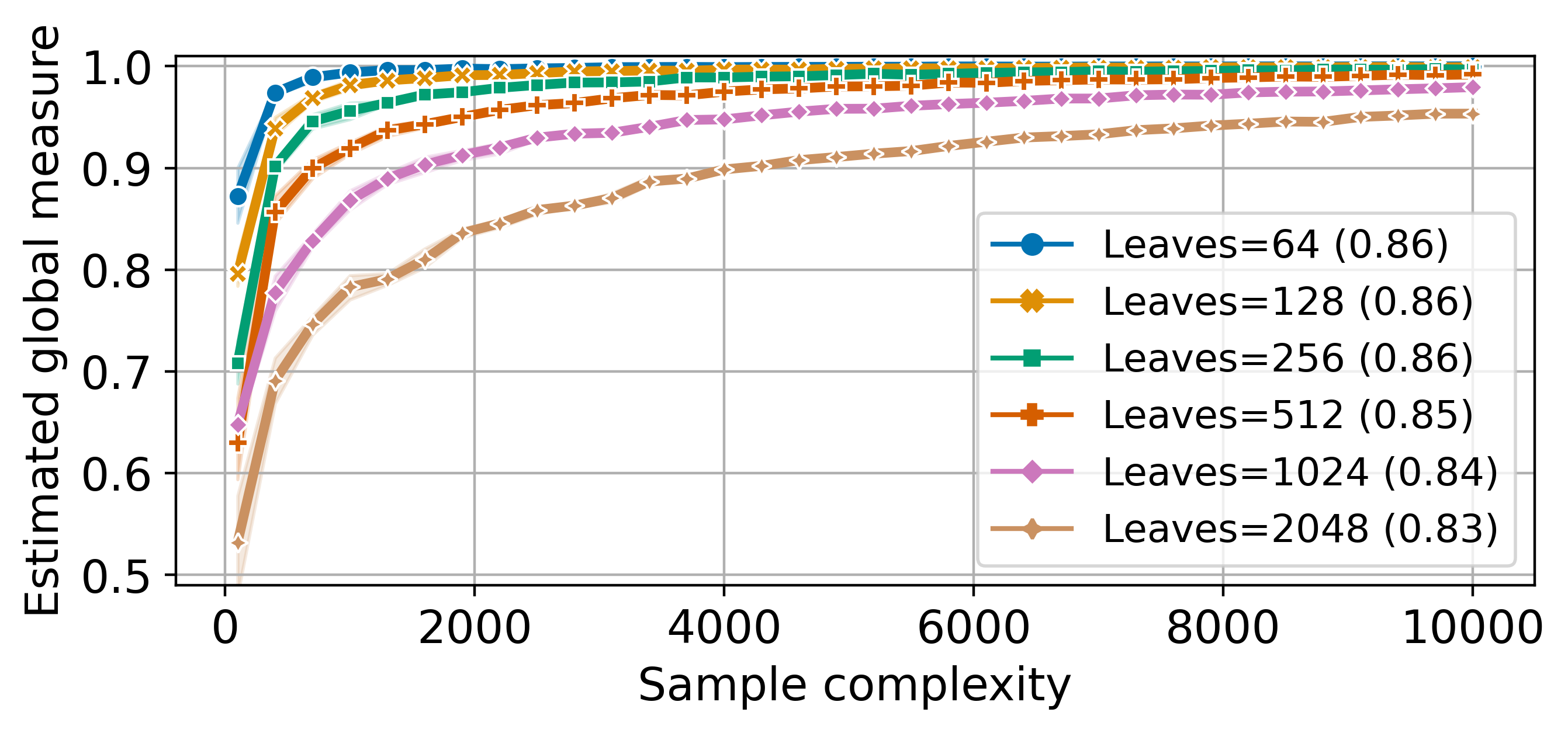}
    \caption{Estimated consistency and sufficiency of decision trees of different sizes over the \adult{} dataset, as a function of the sample complexity.
    As the sample complexity increases the estimation approaches the ground truth measures (1.0). Larger trees have more leaves and thus a larger explanation space. The decision tree model accuracy over the full test set is reported in the parenthesis in the legend.
    The displayed results are averaged over 5 executions with a confidence interval of 95\%.}\label{fig:main_adult_decision_tree}
\end{figure}

\subsection{Common explanation systems}\label{subsec:exp_common_explainers}

We next discuss two important considerations in applying our faithfulness estimators in practice: (1) the effect of explainer parameters on the quality of the estimates, and (2) the use of discretization to reduce explanation uniqueness and thereby improve estimation. These apply generically for many common explanation systems. For concreteness, the predictors in our experiments are gradient boosted trees, which are frequently used by practitioners (described in Appendix~\ref{sec:appendix_traning}). The analysis is conducted on six standard datasets (described in Appendix~\ref{sec:appendix_datasets}).

First, the choice of the explainer's parameters 
can impact not only sufficiency and consistency but also the accuracy of estimation. 
For example, a key parameter in Anchors is precision \threshold{}, and high \threshold{} leads to better sufficiency. Moreover, for high \threshold{}, the anchors are typically smaller, as more explanations are possible (increasing the number of explanations worsens the estimators, as seen in Section~\ref{subsec:canonical_properties}). We next illustrate this phenomenon both locally and globally over the \adult{} dataset.  

Figure~\ref{fig:adult_anchor_example}, shows an example of the effect on local measures.
(\ref{fig:adult_anchor}) shows the explanations, $\pi_1$ and $\pi_2$, of two different anchors over the record depicted in (\ref{fig:adult_record}).
The two explainers differ only in their precision \threshold{} parameter (0.5 and 0.95).
(\ref{fig:adult_anchor_stats}) presents the statistics of these explainers when applied over the record from (\ref{fig:adult_record}) and the \adult{} test set.   
One can see that $\pi_2$ refines $\pi_1$ since it includes more conditions, and hence $\lvert C_{\pi_1} \rvert > \lvert C_{\pi_2} \rvert$.  Moreover, using higher \threshold{} improved the sufficiency. 

\begin{figure}[!htb]
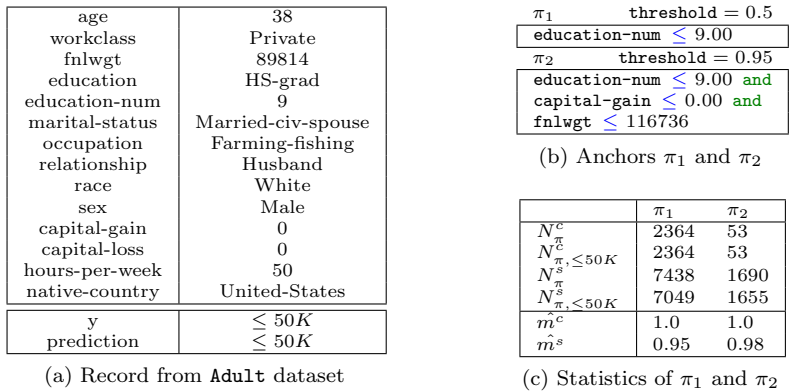

    \centering
    \begin{minipage}{.55\linewidth}
    \centering
    \subfloat[Record from \adult{} dataset]{
        \centering
        \scriptsize
        \begin{tabular}{|c|c|}
            \hline
             age & 38 \\
             workclass & Private  \\
             fnlwgt & 89814 \\
             education & HS-grad \\
             education-num & 9 \\
             marital-status & Married-civ-spouse \\
             occupation & Farming-fishing \\
             relationship & Husband \\
             race & White \\
             sex & Male \\
             capital-gain & 0 \\
             capital-loss & 0 \\
             hours-per-week & 50 \\
             native-country & United-States \\
             \hline
             \hline
             y & $\leq 50K$ \\
             prediction & $\leq 50K$ \\
             \hline
        \end{tabular}
        \label{fig:adult_record}
    }
    \end{minipage}%
    \begin{minipage}{.45\linewidth}
    \centering
    \subfloat[Anchors $\pi_1$ and $\pi_2$]{
        \centering
        \scriptsize
        \begin{tabular}{l}
            $\pi_1$ \hfill $\threshold{}=0.5$\\
            \hline
            \multicolumn{1}{|l|}
            {$\texttt{education-num } {\textcolor{ao}{\leq}} \text{ } 9.00 $}\\
            \hline
            $\pi_2$ \hfill $\threshold{}=0.95$\\
            \hline
            \multicolumn{1}{|l|}
            {$\texttt{education-num } {\textcolor{ao}{\leq}} \text{ } 9.00 {\textcolor{ao(english)}{\texttt{ and}}}$} \\
            \multicolumn{1}{|l|}
            {$\texttt{capital-gain } {\textcolor{ao}{\leq}}  \text{ } 0.00 {\textcolor{ao(english)}{\texttt{ and}}}$} \\
            \multicolumn{1}{|l|}
            {$\texttt{fnlwgt } {\textcolor{ao}{\leq}}  \text{ } 116736$}\\
            \hline
        \end{tabular}
        \label{fig:adult_anchor}
    }\\
    \subfloat[Statistics of $\pi_1$ and $\pi_2$]{
    \scriptsize
    \centering
    \begin{tabular}{|l|l l|}
        \hline
        & \textbf{$\pi_1$} & \textbf{$\pi_2$} \\
        \hline
        $N^{c}_{\pi}$ & 2364 & 53 \\
        $N^{c}_{\pi, \leq 50K}$ & 2364 & 53 \\
        $N^{s}_{\pi}$ & 7438  & 1690\\
        $N^{s}_{\pi, \leq 50K}$ & 7049 & 1655 \\
        \hline
        $\hat{m^c}$ & 1.0 & 1.0 \\
        $\hat{m^s}$ & 0.95 & 0.98 \\
        \hline
    \end{tabular}
    \label{fig:adult_anchor_stats}
    }
    \end{minipage}
    \caption{Two Anchors explainers with different precision \threshold{} parameters, and their performance statistics over an example record from the \adult{} dataset.}
    \label{fig:adult_anchor_example}
\end{figure}

Moving to global measures, 
Figure~\ref{fig:adult_anchor_estimates} shows faithfulness estimates for Anchors applied to gradient boosted trees trained on the \adult{} dataset (Appendix~\ref{sec:appendix_anchors_precision} has results for other datasets), as a function of the precision. 
As expected, as the precision increases, so do sufficiency and uniqueness. Note that higher uniqueness reduces estimator accuracy.

\begin{figure}
    \centering
    \includegraphics[width=.8\linewidth]{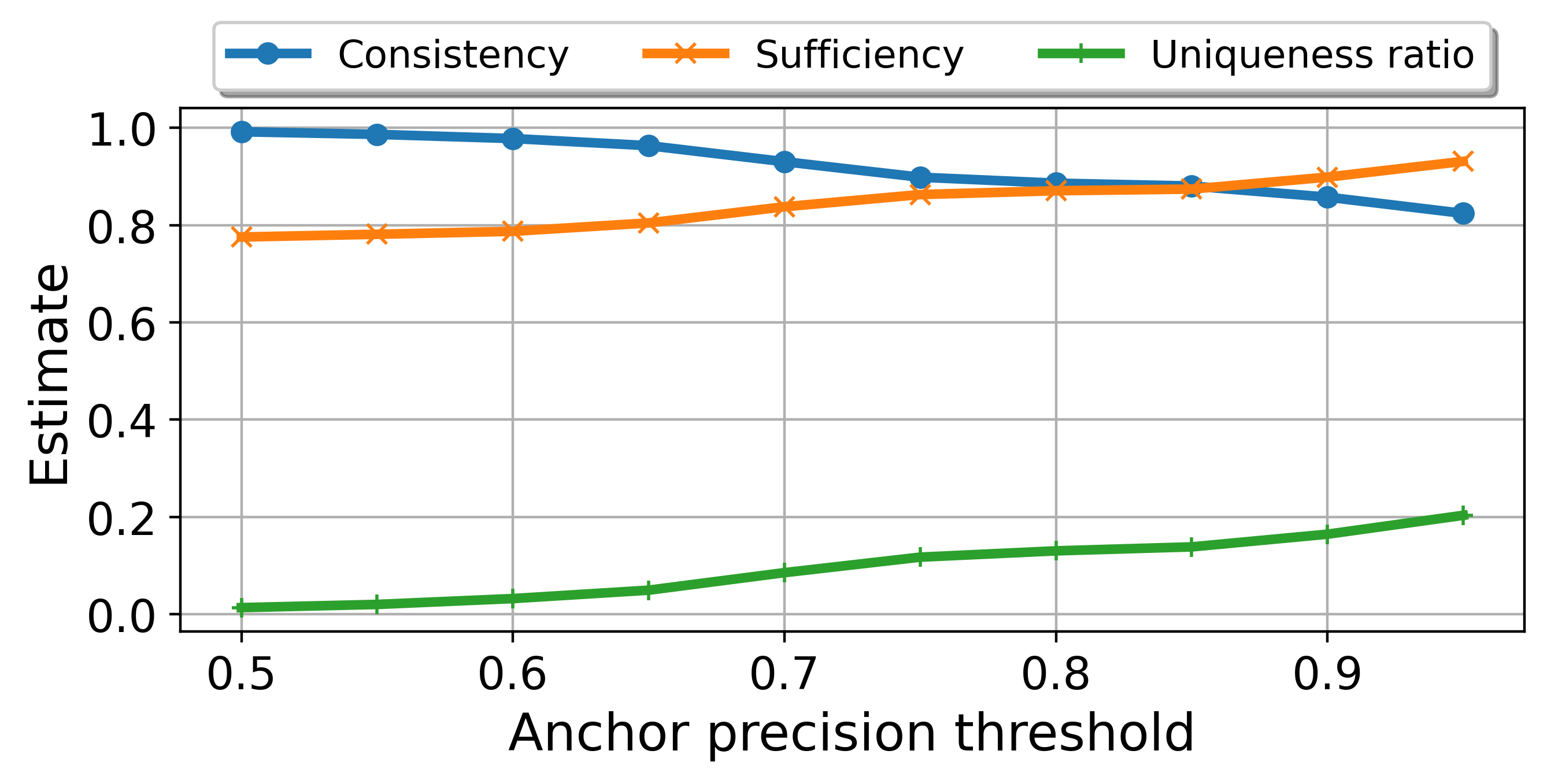}
    \caption{Estimated global consistency and sufficiency and the number of unique explanations of the Anchors explainer over gradient boosted trees model for the \adult{} dataset as a function of precision \threshold{} parameter.}
    \label{fig:adult_anchor_estimates}
\end{figure}

Second, discretizing the output seems to be an effective way to mitigate uniqueness.
This is illustrated in Table~\ref{tbl:SHAP_discretization}, which shows the results of 5 different discretization methods of SHAP values (described in Appendix~\ref{sec:appendix_discretization}) and a non-discretized baseline over 6 datasets. 
As one would expect (based on Claim~\ref{clm:unverifiable_explainer}), without discretization the measures are extremely low. Moreover, while all examined discretization methods improve on the non-discretized baselines, the optimal method depends on the dataset and explainer at hand. Hence, one is encouraged to experiment with different methods to identify the best approach.
In Appendix~\ref{sec:appendix_discretization} we also provide the uniqueness ratio of each discretization method, along with discretizations of LIME and Counterfactuals that exhibit similar behavior.

\begin{table}[!htb]
    \centering
    \caption{SHAP consistency scores for various discretizations, averaged over 5 executions (std is lower than 0.01 in all cases).}
    \label{tab:shap_disc}
    \begin{tabular}{c c c c c c c c}
        \toprule
        Dataset &
        Original &
        2-FP &
        1-FP & 
        Sign &
        Rank &
        Sign-of-top-5 \\
        \midrule
        Heart & 0.0 & 0.0 & \textbf{0.48} & 0.02 & 0.02 & 0.39 \\
        Chess & 0.0 & 0.0 & 0.0 & 0.33 & 0.32 & \textbf{0.35} \\
        Avila & 0.01 & 0.01 & 0.05 & \textbf{0.71} & 0.56 & 0.58 \\
        Bank marketing & 0.03 & 0.40 & \textbf{0.93} & 0.49 & 0.38 & 0.86 \\
        Adult & 0.02 & 0.11 & \textbf{0.95} & 0.68 & 0.15 & 0.89 \\
        Covtype & 0.01 & 0.03 & \textbf{0.68} & 0.13 & 0.09 & 0.41 \\
        \bottomrule
    \end{tabular}\label{tbl:SHAP_discretization}
\end{table}

\subsection{Explanation quality is data-dependent}\label{subsec:exp_data_dependent}
The consistency and sufficiency definitions imply that the faithfulness of an explainer depends on the test distribution. When two explanation methods are available, one might be more faithful for some populations, while the other works better for other populations. Moreover, as data distribution changes over time, explanation methods must also adapt. It is not advisable to deploy explainers in real-life settings without verifying faithfulness on the target distribution.

In Figure~\ref{fig:populations_adult_consistency} we demonstrate this by splitting the \adult{} dataset into two different populations. Each positive example is in the first population with probability $0.75$ and each negative example is in the first population with probability $0.25$. We used $4$ different explainers on the two populations (Anchors with \threshold{} of 0.7, SHAP and LIME with 1-FP discretization, and Counterfactuals with discretization of the sign of modification). We estimated the consistency of each population for $5$ repetitions and recorded the average and standard deviation. For all the explainers, the consistency is different between the two populations. We remark that Anchors has the highest consistency in the first population ($0.934\pm0.003$), while SHAP has the highest consistency in the second  ($0.885 \pm 0.006$).

\begin{figure}[!htb]
    \centering
    \includegraphics[width=.8\linewidth]{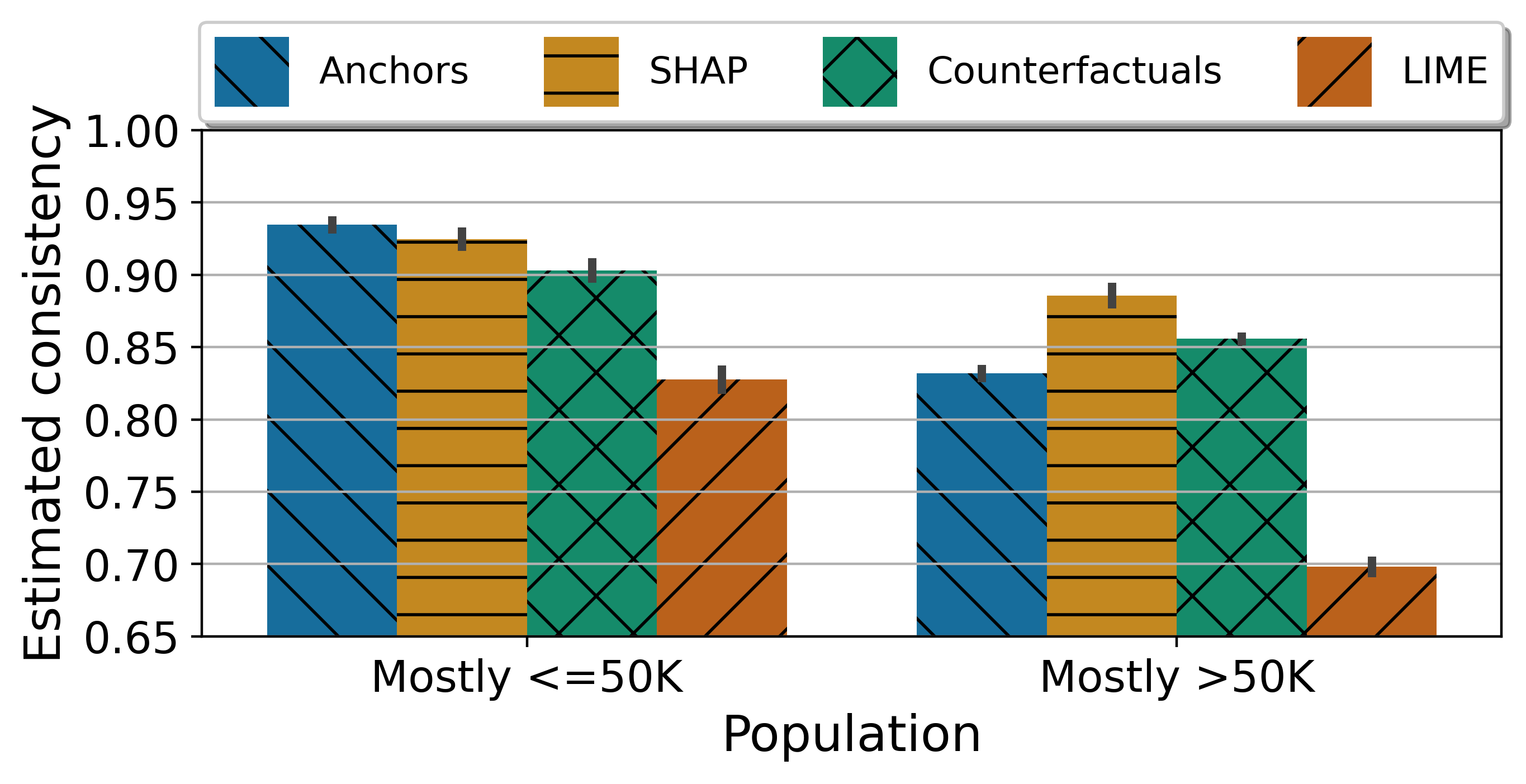}
    \caption{
    Estimated global consistency differentiate over distributions of \adult{} dataset.
    Each of the test examples was randomly assigned to one of two populations based on its label. Example with label ``$\leq$50K'' (resp. ``$>$50K'') chances of being assigned to the first population are 0.75 (resp. 0.25) and chances to be assigned to the second are 0.25 (resp. 0.75).
    The displayed results are the mean of 5 executions with std bars.}
    \label{fig:populations_adult_consistency}
\end{figure}

\section{Conclusion}
We suggest two new measures evaluating the faithfulness of explanations, both locally and globally. These are the consistency and sufficiency measures. We showed estimators for these measures and bounded the sample complexity of the global measures by an \emph{unlabeled} sample of size $O(\mathtt{range}(e))$, for constant $\epsilon$ error. We analyzed these measures on several known methods: decision trees, Anchors, highlighted text, SHAP, LIME, gradient-based method, $k$-nn, and counterfactuals. 
We empirically examined these measures, highlighting essential properties, e.g., faithfulness can be unverifiable if there are too many explanations and faithfulness quality is data-dependent.

\section*{Acknowledgements}
We would like to thank Yoav Goldberg for introducing us to the problem of faithfulness in NLP which initiated this project. 

\section*{Funding transparency statement}
Sanjoy Dasgupta has been supported by NSF CCF-1813160 and  NSF IIS-1956339. Nave Frost has been funded by the European Research Council (ERC) under the European Union’s Horizon
2020 research and innovation programme (Grant agreement No. 804302).
Michal Moshkovitz has received funding from the European Research Council (ERC) under the European Union’s Horizon 2020 research and innovation program (grant agreement No. 882396), by the Israel Science Foundation (grant number 993/17), Tel Aviv University Center for AI and Data Science (TAD), and the Yandex Initiative for Machine Learning at Tel Aviv University.

\bibliography{main}
\bibliographystyle{plain}

\newpage
\appendix
\onecolumn

\section{Proofs}

\subsection{Proof of Claim~\ref{clm:global_consis_gibbs_detr}}

For the sake of completeness, we repeat some of the definitions. For a fixed explanation $\pi$, the probability that it resulted from an instance labeled $y$ is equal to $$\Pr(y|\pi)=\sum_{x:f(x)=y\wedge e(x)=\pi}\frac{\Pr(x)}{\Pr(\pi)},$$ where $\Pr(\pi) = \sum_{x:e(x)=\pi} \Pr(x)$ and the distribution over the instances $x$'s is $\mu.$
There are two natural ways to define decoders from explanations to labels:
\begin{itemize}
    \item Gibbs decoder 
    \end{itemize}
    $$d_G(\pi) = y \text{ with probability }  \Pr(y|\pi)$$ 
\begin{itemize}
\item Optimal deterministic decoder $$d_O(\pi) =\argmax_y \Pr(y|\pi)$$ 
\end{itemize}

The error of any decoder $d$ is equal to 
$$\sum_{\pi}\Pr(\pi)\sum_{y\in\cY}\Pr(y|\pi)\Pr(d(\pi)\neq y |\pi).$$
Specifically, the error of the Gibbs decoder is equal to 
$$E_G = \sum_{\pi}\Pr(\pi)\sum_{y\in\cY}\Pr(y|\pi)(1-\Pr(y|\pi)).$$
The error of the optimal deterministic decoder is equal to 
$$E_O=\sum_\pi\Pr(\pi)(1-\max_y\Pr(y|\pi)).$$
Now we are ready to prove the claim that $$E_O\leq E_G \leq 2E_O.$$ 
For ease of notations, arrange the probabilities $(\Pr(y|\pi))_{y\in\cY}$ in decreasing order $p_1\geq p_2\geq\ldots p_{|\cY|}.$

We first prove the left inequality in the claim. We will show that for every explanation $\pi$ it holds that $$1-p_1\leq \sum_j p_j(1-p_j).$$ Or equivalently, we will show that $$\sum_j p_j^2\leq p_1.$$ The latter holds because 
$$\sum_j p_j^2 \leq \sum_j p_j\cdot p_1 =p_1.$$

Now we move on to proving the right inequality in the claim. We will show $$\sum_j p_j(1-p_j)\leq 2(1-p_1).$$
The LHS is equal to 
\begin{eqnarray*}
p_1(1-p_1) &+& \sum_{j>1}p_j(1-p_j) \leq 1-p_1 + \sum_{j>1} p_j \\ &=& 2(1-p_1)
\end{eqnarray*}

\subsection{Proof of Theorem~\ref{thm:rate-of-convergence}}

Recall that $R(x,\pi)$ is an arbitrary relation on $\cX \times \cE$. For a distribution $\mu$ on $\cX$, we wish to estimate
$$ m^{R}_\mu = \Pr_{X,X' \sim \mu} \left( f(X') = f(X) | R(X', e(X)) \right) .$$

To begin with, let $q(y|\pi)$ denote the probability that a random point $x \sim \mu$ has predicted label $y$ given $R(x,\pi)$:
$$ q(y|\pi) = \Pr_{X \sim \mu}(f(X) = y | R(X,\pi)) .$$
Then we can rewrite our generic faithfulness measure as
$$ m^R_\mu = \E_{X \sim \mu} [q(f(X)|e(X))] .$$

Let $p(\pi)$ be the fraction of points for which explanation $\pi$ is provided, that is, $p(\pi) = \mu(\{x: e(x) = \pi\})$ and let $q(\pi)$ be the fraction for which $R(x,\pi)$ holds: $q(\pi) = \mu(\{x: R(x,\pi)\})$. Note that $p(\pi)$ is a distribution over $\cE$ whereas $q(\pi)\in [0,1]$ and $q(\pi) \geq p(\pi)$.

Given samples $x_1, \ldots, x_n \sim \mu$, and any $y, \pi$, define
\begin{align*}
    N_\pi &= |\{i: R(x_i, \pi)\}| \\
    N_{\pi,y} &= |\{i: R(x_i, \pi), f(x_i) = y\}|
\end{align*}
Our estimator for $m^R_\mu$ is then
$$ \widehat{M} = \frac{1}{n} \sum_{i=1}^n {\bf 1}(N_{e(x_i)} > 1) \, \frac{N_{e(x_i), f(x_i)} - 1}{N_{e(x_i)} - 1} .$$

We start by deriving the expected value of $\widehat{M}$.
\begin{theorem}
For estimator $\widehat{M}$,
$$ \E [ \widehat{M} ] = \E_{X \sim \mu}\left[ (1 - (1-q(e(X)))^{n-1}) q(f(X)|e(X)) \right].$$
\label{thm:estimator-bias}
\end{theorem}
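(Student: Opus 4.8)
The plan is to compute the expectation of $\widehat{M}$ by linearity over the $n$ summands, using symmetry to reduce to a single representative term, and then conditioning on the relevant explanation to expose a simple binomial computation.

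\medskip

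\noindent\textbf{Step 1: Reduce to one summand by symmetry.} Since $x_1,\dots,x_n$ are i.i.d.\ from $\mu$, each of the $n$ terms in the sum defining $\widehat{M}$ has the same expectation, so
$$ \E[\widehat{M}] = \E\left[ {\bf 1}(N_{e(x_1)} > 1)\, \frac{N_{e(x_1),f(x_1)} - 1}{N_{e(x_1)} - 1} \right]. $$
Here $N_{e(x_1)}$ counts indices $i$ (including $i=1$, since $R(x_1,e(x_1))$ holds because $x_1 \in S_{e(x_1)}$, i.e.\ $e(x_1) = \pi$ implies $R(x_1,\pi)$ for both the consistency relation $e(x_1)=\pi$ and the sufficiency relation $A(x_1,\pi)$) with $R(x_i, e(x_1))$, and $N_{e(x_1),f(x_1)}$ counts those that additionally have $f(x_i) = f(x_1)$.

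\medskip

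\noindent\textbf{Step 2: Condition on $x_1$, then on the other samples.} Fix the value of $x_1$; write $\pi = e(x_1)$ and $y = f(x_1)$. Among the remaining $n-1$ samples $x_2,\dots,x_n$, let $K$ be the number with $R(x_i,\pi)$ and $L$ the number with $R(x_i,\pi)$ and $f(x_i) = y$. Then $N_\pi = K+1$ and $N_{\pi,y} = L+1$, so the quantity inside the expectation becomes ${\bf 1}(K \geq 1)\,\frac{L}{K}$. Conditioned on $x_1$, the pair $(K, L)$ arises from $n-1$ i.i.d.\ draws where each draw independently lands in ``$R(\cdot,\pi)$ and label $y$'' with probability $q(y|\pi)q(\pi)$, in ``$R(\cdot,\pi)$ and label $\neq y$'' with probability $(1-q(y|\pi))q(\pi)$, and outside $R(\cdot,\pi)$ with probability $1-q(\pi)$. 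The key identity to establish is
$$ \E\left[ {\bf 1}(K \geq 1)\,\frac{L}{K} \,\Big|\, x_1 \right] = \bigl(1 - (1-q(\pi))^{n-1}\bigr)\, q(y|\pi). $$

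\medskip

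\noindent\textbf{Step 3: Prove the identity.} Condition further on $K = k$ for $k \geq 1$: given that $k$ of the $n-1$ points satisfy $R(\cdot,\pi)$, each of those $k$ points independently has label $y$ with probability $q(y|\pi)$, so $L \mid K=k \sim \mathrm{Binomial}(k, q(y|\pi))$ and $\E[L/K \mid K=k] = q(y|\pi)$, independent of $k$. Hence $\E[{\bf 1}(K\geq 1)\frac{L}{K} \mid x_1] = \Pr(K \geq 1 \mid x_1)\cdot q(y|\pi) = (1 - (1-q(\pi))^{n-1})\, q(y|\pi)$, since $K \sim \mathrm{Binomial}(n-1, q(\pi))$. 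Substituting $\pi = e(x_1)$, $y = f(x_1)$ and taking the outer expectation over $x_1 \sim \mu$ yields the claimed formula.

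\medskip

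\noindent\textbf{Main obstacle.} The only subtle point is the bookkeeping in Step 2–3: one must be careful that $x_1$ itself is always counted in both $N_\pi$ and $N_{\pi,y}$ (this is why the ``$-1$'' corrections appear in the estimator and why we get exactly $L/K$ rather than $(L+1)/(K+1)$), and that conditioning on $K=k$ leaves the labels of those $k$ points as a clean Binomial$(k, q(y|\pi))$. Everything else is routine; no concentration or inequality is needed for this expectation computation (those enter only in the variance bound of Theorem~\ref{thm:rate-of-convergence}).
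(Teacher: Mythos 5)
Your proof is correct and takes essentially the same route as the paper's: fix the sample $x_i$, note that conditioned on which of the other $n-1$ points satisfy $R(\cdot,\pi)$ each independently has label $y=f(x_i)$ with probability $q(y|\pi)$, so the ratio has conditional expectation $q(y|\pi)$ whenever at least one other point qualifies, and multiply by $\Pr(N_\pi>1\mid x_i)=1-(1-q(\pi))^{n-1}$. Your explicit $(K,L)$ binomial bookkeeping is just a more detailed rendering of the paper's one-line computation of $\E_{\setminus i}\left[\frac{N_{\pi,y}-1}{N_\pi-1}\,\middle|\,N_\pi=k+1\right]=q(y|\pi)$.
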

\begin{proof}
Fix any $i \in [n]$. 
The term $\E_{\setminus i}$ denotes expectation over all points other than $i$. We will also use $\E_i$ to denote expectation over point $i$ alone.
Let $y = f(x_i)$ and $\pi = e(x_i)$, and let $k$ be the number of \emph{other} points (that is, $j \neq i$) to which $\pi$ also applies: that is, $k = N_\pi - 1$. Suppose these points are $x_{i_1}, \ldots, x_{i_k}$. If $k > 0$, then $\E_{\setminus i} \left[ \frac{N_{\pi,y}-1}{N_\pi-1} \bigg| N_\pi = k+1 \right]$
is equal to 
$$
\frac{1}{k}\sum_{j=1}^k(\Pr(f(x_{i_j}) = y|R(x_{i_j},\pi))=q(y|\pi).
$$
We then have that $\E[\widehat{M}]$ is equal to 
\begin{align*}
&
\frac{1}{n} \sum_{i=1}^n \E \left[ {\bf 1}(N_{e(x_i)} > 1) \, \frac{N_{e(x_i), f(x_i)} - 1}{N_{e(x_i)} - 1} \right] \\
&=
\frac{1}{n} \sum_{i=1}^n \E_i \left[ {\bf 1}(N_{e(x_i)} > 1) \, \E_{\setminus i} \left[\frac{N_{e(x_i), f(x_i)} - 1}{N_{e(x_i)} - 1} \bigg| N_{e(x_i)} > 1 \right] \right] \\
&=
\frac{1}{n} \sum_{i=1}^n \E_{i} \left[ {\bf 1}(N_{e(x_i)} > 1) q(f(x_i)|e(x_i)) \right] \\
&= 
\E_{X \sim \mu}\left[ (1 - (1-q(e(X)))^{n-1}) q(f(X)|e(X)) \right],
\end{align*}
as claimed.
\end{proof}

Next, we upper-bound the variance of $\widehat{M}$.
\begin{theorem}
$\mbox{\rm var}(\widehat{M}) \leq 4/n$.
\label{thm:estimator-var}
\end{theorem}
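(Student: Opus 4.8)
The plan is to apply the Efron--Stein inequality to $\widehat{M}=\frac1n\sum_{i=1}^n\phi_i$, where $\phi_i:={\bf 1}(N_{e(x_i)}>1)\,\frac{N_{e(x_i),f(x_i)}-1}{N_{e(x_i)}-1}\in[0,1]$ denotes the $i$-th summand. Let $\widehat{M}^{(j)}$ be the estimator obtained by replacing $x_j$ with an independent copy $x_j'$, all other samples fixed. Efron--Stein gives $\mathrm{var}(\widehat{M})\le\frac12\sum_{j=1}^n\E[(\widehat{M}-\widehat{M}^{(j)})^2]$, so it suffices to show each term is $O(1/n^2)$ with the right constant. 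Write the perturbation as $\widehat{M}-\widehat{M}^{(j)}=\frac1n(\phi_j-\phi_j')+\frac1n\sum_{i\neq j}(\phi_i-\phi_i')$. The ``self'' part is immediate: $|\phi_j-\phi_j'|\le 1$. For a ``cross'' term with $i\neq j$, set $\pi=e(x_i)$, $y=f(x_i)$; resampling $x_j$ changes $N_\pi$ and $N_{\pi,y}$ by at most $1$ each, and a short case analysis (the relation $R(\cdot,\pi)$ either flips on $x_j$, or stays true while only $f(x_j)$ moves), together with $N_{\pi,y}\le N_\pi$, yields $|\phi_i-\phi_i'|\le c/(N_\pi-1)$ for an absolute constant $c$, and $\phi_i-\phi_i'=0$ unless $A(x_j,\pi)$ or $A(x_j',\pi)$ holds.

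In the consistency case, where $R(x,\pi)\equiv(e(x)=\pi)$, the sets $\{x:e(x)=\pi\}$ partition the sample, so resampling $x_j$ affects only the two cells $e(x_j)$ and $e(x_j')$. Each such cell contributes at most $N_\pi$ summands, each moved by $\le c/(N_\pi-1)$, and a direct computation of the change in a cell's contribution $\frac1n\cdot\frac{\sum_y N_{\pi,y}(N_{\pi,y}-1)}{N_\pi-1}$ shows it is $O(1/n)$; hence $|\widehat{M}-\widehat{M}^{(j)}|\le C/n$ \emph{deterministically}, and Efron--Stein gives $\mathrm{var}(\widehat{M})\le C^2/(2n)$. (This is the ``known estimator for the order-two power sum'' plus Efron--Stein route.)

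For the general relation $R$ --- in particular for sufficiency --- there is no deterministic $O(1/n)$ bound, since a single point $x_j$ can lie in the scopes of the explanations of many samples at once. Instead I would bound $\E[(\widehat{M}-\widehat{M}^{(j)})^2]$ by observing that, conditioned on the samples other than $j$, the quantity $\sum_{i\neq j}\phi_i$ is a fixed function $g$ of $x_j$, so $\sum_{i\neq j}(\phi_i-\phi_i')=g(x_j)-g(x_j')$ is a difference of two i.i.d.\ draws of $g$ and has conditional mean square $2\,\mathrm{var}_{x_j}(g)$; thus $\E[(\widehat{M}-\widehat{M}^{(j)})^2]\le\frac{2}{n^2}\big(1+2\,\E[\mathrm{var}_{x_j}(g)]\big)$. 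To control $\E[\mathrm{var}_{x_j}(g)]$ I would use $\mathrm{var}_{x_j}(g)\le\tfrac12\E_{x_j,x_j'}[(g(x_j)-g(x_j'))^2]$, expand the square, and note that $x_j$ triggers the scope $\pi=e(x_i)$ with probability only $q(\pi)$ while, when it does, the perturbation of $\phi_i$ is $\le c/(N_\pi-1)$ with $N_\pi$ of order $n\,q(\pi)$ --- so the effect is self-limiting. Pairing terms via $\Pr_{x_j}[A(x_j,\pi)\wedge A(x_j,\pi')]\le\min(q(\pi),q(\pi'))$ and $m_\pi\ge|\{i\neq j:e(x_i)=\pi\}|$ collapses the double sum, leaving $\E[\mathrm{var}_{x_j}(g)]=O(1)$ and hence $\mathrm{var}(\widehat{M})=O(1/n)$, with careful bookkeeping of the constants giving the stated bound $4/n$.

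The main obstacle is this last step: unlike the partition case, the per-coordinate perturbation for general $R$ has a heavy tail (worst case $\Theta(1)$), so bounded-difference arguments fail and one must instead execute the second-moment estimate of $\mathrm{var}_{x_j}\big(\sum_{i\neq j}\phi_i\big)$, precisely balancing the ``triggering'' probabilities $q(\pi)$ against the denominators $N_\pi\approx n\,q(\pi)$ so that the aggregate contribution, summed over the (up to $n$) affected explanations, stays $O(1)$ rather than $O(n)$.
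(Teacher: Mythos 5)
Your plan follows the same skeleton as the paper's proof, which consists entirely of the assertion that replacing any single sample changes $\widehat{M}$ by at most $4/n$, followed by the bounded-differences corollary of Efron--Stein. Your treatment of the consistency case is exactly the verification the paper leaves implicit, and it is correct (up to pinning down the constant): since the cells $\{i:e(x_i)=\pi\}$ partition the sample, resampling $x_j$ touches only the two cells $e(x_j)$ and $e(x_j')$, each cell contributes $O(1)$ to $\sum_{i}\lvert\phi_i-\phi_i'\rvert$, and the $1/n$ prefactor yields a deterministic $O(1/n)$ perturbation. You are also right that this is precisely where the partition structure is doing all the work.

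The gap is in the general-$R$ (sufficiency) case, and it is a gap you correctly diagnose but do not close---and one that cannot be closed, because for arbitrary $R$ the statement of Theorem~\ref{thm:estimator-var} is itself false, so the paper's ``it is not hard to check'' step is wrong there too. Your proposed repair hinges on $\E[\mathrm{var}_{x_j}(g)]=O(1)$ for $g=\sum_{i\ne j}\phi_i$, which fails. Concretely, for a given $n$ take $\cX=\{h\}\cup Z$ with $\mu(h)=1/n$ and the remaining mass spread over a very large set $Z$; give every point the same predicted label; let $e(z)=\pi_z$ with scope $S_{\pi_z}=\{z,h\}$ for each $z\in Z$ and $e(h)=\pi_0$ with scope $\{h\}$. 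If no copy of $h$ is drawn (probability about $e^{-1}$), every $N_{e(x_i)}$ equals $1$ and $\widehat{M}=0$; if at least one copy is drawn, $\widehat{M}\ge (n-1)/n$. Hence $\mathrm{var}(\widehat{M})\approx e^{-1}(1-e^{-1})\approx 0.23$, far exceeding $4/n$; equivalently, resampling the single hub sample moves $\widehat{M}$ by nearly $1$, and $g(h)-g(z)\approx n$ makes $\mathrm{var}_{x_j}(g)=\Theta(n)$, so the ``self-limiting'' cancellation you hope for between the trigger probability $q(\pi)$ and the denominator $N_\pi$ does not occur when one point lies in the scopes of $\Theta(n)$ distinct sampled explanations, each of which is otherwise a near-singleton. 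The honest conclusion is that the bounded-differences argument (yours and the paper's) proves the bound for consistency, while for sufficiency the theorem requires an additional hypothesis---e.g., a bound on the number of distinct sampled explanations whose scope any single instance can belong to---rather than a more careful variance computation.
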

\begin{proof}
Suppose $\widehat{M}$ is based on $n$ samples $x_1, \ldots, x_n \sim \mu$. It is not hard to check that changing any one sample, $x_i \to x_i'$, can change $\widehat{M}$ by at most $4/n$. Thus $\widehat{M}$ satisfies a bounded-differences property, whereupon its variance can be bounded by a form of the Efron-Stein inequality (Boucheron, Lugosi, Massart, Cor 3.2). 
\end{proof}

We then sum the bias and variance to get a bound on mean-squared error. From Theorem~\ref{thm:estimator-bias} and the fact that $q(y|\pi) \in [0,1]$, we can bound the bias, $\left| \E[\widehat{M}] - m^R_\mu \right|$ of $\widehat{M}$ by
\begin{align*} 
&
\left| \E_X \left[ (1 - (1-q(e(X)))^{n-1}) q(f(X)|e(X)) \right] - \E_X [q(f(X)|e(X))] \right| \\
&=
\E_X \left[ (1-q(e(X)))^{n-1} q(f(X)|e(X)) \right] \\
&\leq
\E_X \left[ e^{-(n-1)q(e(X))}  \right] \\
&=
\sum_{\pi \in \cE} p(\pi) e^{-(n-1)q(\pi)} .
\end{align*}
Theorem~\ref{thm:rate-of-convergence} then follows by summing the variance and squared bias.

\subsection{Proof of Claim~\ref{clm:unverifiable_explainer}}

\begin{proof}
The claim will hold for any model $f$ which is balanced, i.e., there is the same number of examples labeled $1$ and examples labeled $-1$. Take the distribution over the examples to be the uniform one. 

The explainer $e_1$ returns a different explanation to each example in $\cX$. To define the explainer $e_2$, partition $\cX$ into pairs $(x_1,x_2)$ where $f(x_1)\neq f(x_2)$. Such a partition is possible because $f$ is balanced. Each pair receives the exact explanation $e_2(x_1)=e_2(x_2)$. 

Suppose that for the two explainers $A(x,\pi)\Leftrightarrow e(x)=\pi$.
By definition, the sufficiency and consistency of $e_1$ is one and the sufficiency and consistency of $e_2$ is $0.5$. 

Note that when given a finite sample, since the set of instances $\cX$ is infinite, with zero probability, the example set will contain a pair. Those it is impossible to distinguish if the true explainer is $e_1$ or $e_2$. 
\end{proof}

\newpage
\section{Example where LIME does not have perfect consistency}\label{apx:lime_no_perfect_consistency}
 We show a model and two instances that get different labels but the same explanation by LIME.  In Figure~\ref{fig:xor_model},  we show the XOR model $f^{XOR}$ which is $1$ if the two features have the same sign (in blue). We are using the LIME method to explain two instances $(2.5,0.5)$ and $(-2.5,0.5)$. The model $f^{XOR}$ assigns these two instances different labels. The output of LIME when given instance $(2.5,0.5)$ and $(-2.5,0.5)$ is the same: second feature has the same positive importance of $0.44$ on both instances and the first feature does not have importance, see Figures~\ref{fig:example_1},\ref{fig:example_2}. The reason for such a behavior is that LIME fits a linear classifier around the labeled instance $(x,y)$ where the goal is to predict the class $y$. From the view point of LIME for both of the instances, a linear classifier is fitted for similar training data, see Figure~\ref{fig:example_neighborhoods}.

\begin{figure*}[!htb]
    \centering
    \subfloat[XOR classification]{
         \includegraphics[width=.4\textwidth]{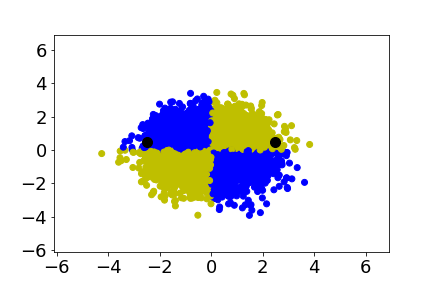}
        \label{fig:xor_model}
    }
    \hfill
     \subfloat[Explanation for $(2.5,0.5)$]{
         \includegraphics[width=.55\textwidth]{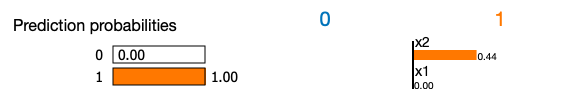}
        \label{fig:example_1}
     }\\
     \subfloat[Neighborhoods]{
        \includegraphics[width=.4\textwidth]{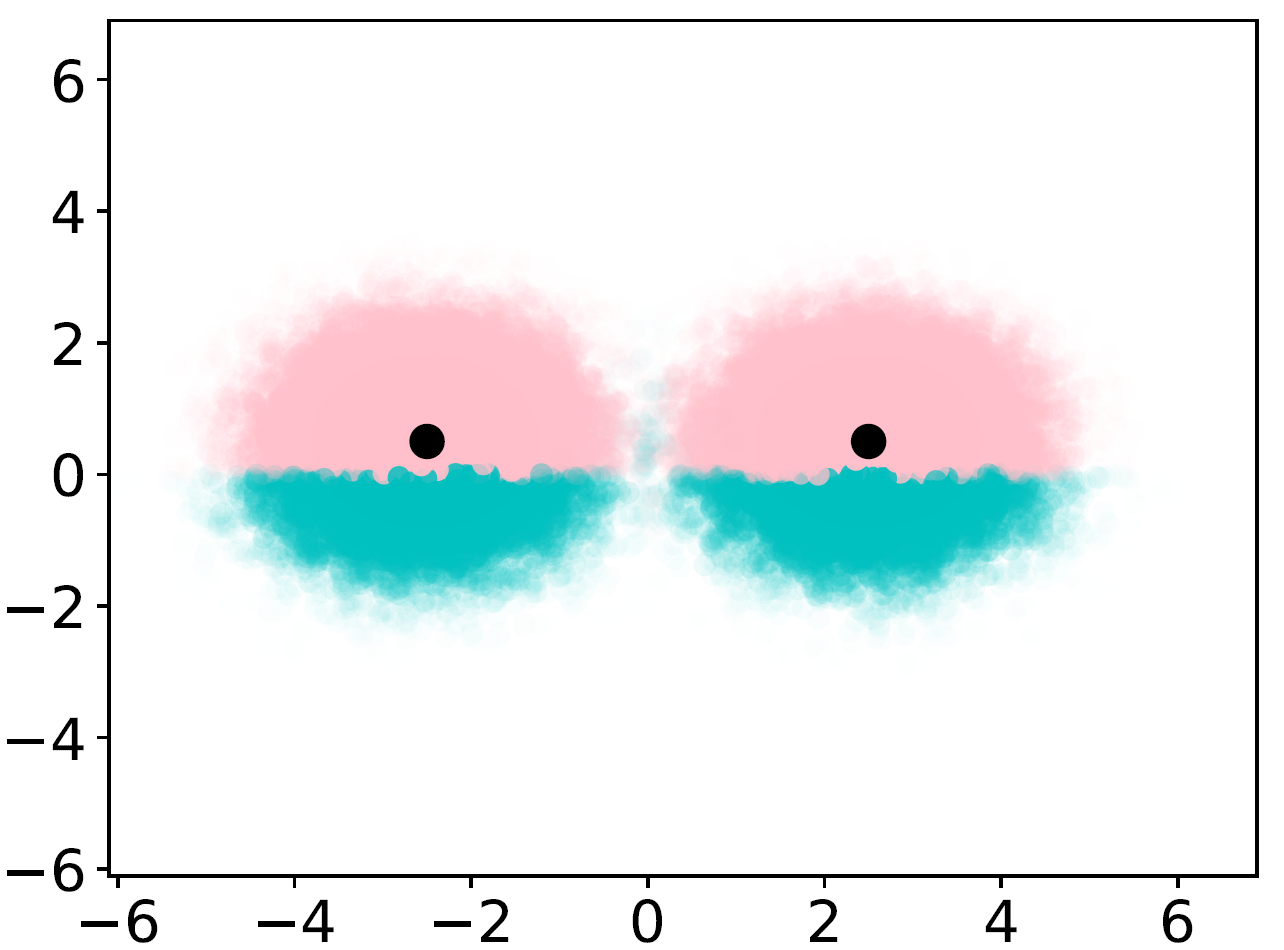}
        \label{fig:example_neighborhoods}
    }
    \hfill
    \subfloat[Explanation for $(-2.5,0.5)$]{
        \includegraphics[width=.55\textwidth]{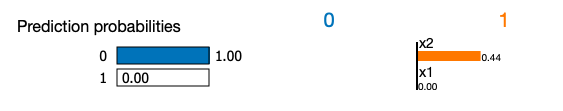}
        \label{fig:example_2}
     }
    \caption{
    (a) XOR model with two instances (in red) with different labels (b,d) LIME provides the same explanation to these instances:  second feature has the same positive importance, $0.44$, on both instances and the first feature does not have importance. (c) During the run of LIME explainer on the two instances, the training data supplied to the linear predictor.
}
    \label{fig:optimal_vs_tree}
\end{figure*}

\newpage
\section{Local estimators}
In this section we explore estimators for the local measures. Namely, Algorithm~\ref{alg:estimate_meaasre} estimates the local consistency and sufficiency measures of explainer $e$ for model $f$ at instance $x$. It uses, as an input, \emph{unlabelled} test data $S$ drawn from distribution $\mu$.
To estimate consistency it returns the fraction of instances with similar label out of all instances with similar explanation. To estimate sufficiency, it returns  the fraction of instances with similar label out of all examples that $e(x)$ applied to.

\begin{algorithm}[!htb]
\begin{algorithmic}
\caption{Estimating local consistency and sufficiency}
\STATE input: model $f$, instance $x$, unlabelled test data $S$
\STATE output: estimate of consistency and sufficiency
\STATE $con_{counter},con_{tot}=0,0$
\STATE $suf_{counter},suf_{tot}=0,0$
\FOR {$x' \in S$}
\IF {$e(x') = e(x)$}
\STATE $con_{counter} += (f(x)==f(x'))$
\STATE $con_{tot}++$
\ENDIF
\IF {$A(x', e(x))$}
\STATE $suf_{counter} += (f(x)==f(x'))$
\STATE $suf_{tot}++$
\ENDIF
\ENDFOR
\STATE \textbf{return} $con_{counter}/con_{tot}, suf_{counter}/suf_{counter}$
\label{alg:estimate_meaasre}
\end{algorithmic}
\end{algorithm}

If we have a random sample $S$ from $C_\pi$ then, by Hoeffding's inequality, it is enough to take sample size $|S|=O(1/\epsilon^2)$ to approximate the consistency measure up to an additive error of $\epsilon$ with constant probability. This is summarized in the following corollary. 

\begin{corollary}
Fix $\epsilon\in(0,1)$ and an instance $x$. Given a sample of size $O(1/\epsilon^2)$ from $C_{e(x)}$, one can estimate $m^c(x)$ up to an additive error $\epsilon$ with probability $0.9$. 
\end{corollary}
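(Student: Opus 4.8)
The plan is to recognize this as a direct application of Hoeffding's inequality to i.i.d.\ Bernoulli samples. By Definition~\ref{dfn:consistency}, writing $\pi = e(x)$, we have
$$ m^c(x) = \Pr_{x' \in_\mu C_\pi}(f(x') = f(x)) = \E_{x' \in_\mu C_\pi}\left[ \mathbf{1}(f(x') = f(x)) \right]. $$
So $m^c(x)$ is simply the mean of the $\{0,1\}$-valued random variable $Z = \mathbf{1}(f(x') = f(x))$ where $x'$ is drawn from $\mu$ restricted to $C_\pi$.

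The first step is to draw $m$ i.i.d.\ samples $x'_1, \ldots, x'_m$ from $\mu|_{C_\pi}$ (this is exactly the hypothesis of the corollary) and set $Z_j = \mathbf{1}(f(x'_j) = f(x))$, so the $Z_j$ are i.i.d.\ copies of $Z$, each lying in $[0,1]$. The estimator is the empirical mean $\widehat{m} = \frac{1}{m}\sum_{j=1}^m Z_j$, which is exactly what Algorithm~\ref{alg:estimate_meaasre} computes (the ratio $con_{counter}/con_{tot}$, conditioned on the samples used being those that land in $C_\pi$). The second step is to apply Hoeffding's inequality: $\Pr\left( |\widehat{m} - m^c(x)| > \epsilon \right) \leq 2 e^{-2 m \epsilon^2}$. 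The final step is to choose $m$ so that the right-hand side is at most $0.1$; solving $2 e^{-2m\epsilon^2} \leq 0.1$ gives $m \geq \frac{1}{2\epsilon^2}\ln 20 = O(1/\epsilon^2)$, which proves the claim.

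There is essentially no technical obstacle here; the content is entirely in the reduction to a bounded-mean estimation problem. The one subtlety worth flagging — and the reason this only yields a \emph{local} estimator under a strong sampling assumption — is that we require genuine i.i.d.\ samples from the conditional distribution $\mu|_{C_{e(x)}}$, i.e.\ from instances that already receive the explanation $e(x)$. For a black-box explainer this conditional sampling may not be directly available (one would instead sample from $\mu$ and reject), in which case the number of raw $\mu$-samples needed scales like $1/(\epsilon^2 p(e(x)))$, and the rarer explanations are precisely where estimation is hard — which is the phenomenon already quantified for the global estimator in Theorem~\ref{thm:rate-of-convergence}. The analogous statement for local sufficiency $m^s(x)$ follows identically, using samples from $C_x$ in place of $C_{e(x)}$.
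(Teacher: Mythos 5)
Your proposal is correct and follows exactly the paper's argument: the paper likewise treats $m^c(x)$ as the mean of a bounded indicator variable over i.i.d.\ samples from $C_{e(x)}$ and invokes Hoeffding's inequality to get the $O(1/\epsilon^2)$ bound. Even your caveat about the difficulty of obtaining conditional samples and the need for rejection sampling mirrors the discussion the paper gives immediately after the corollary.
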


The difficulty with the above corollary is the assumption that one can obtain enough samples from $C_{e(x)}$. This assumption is sometimes unrealistic. To get an instance from $C_{e(x)}$, one can use \emph{rejection sampling}. Where instances are received from arbitrary distribution, but then reject any instance that is not in $C_{e(x)}$. Although this is a reasonable technique, it might take a long time till an instance from $C_{e(x)}$ is received.

\newpage
\section{More experimental details}

\subsection{Datasets}
\label{sec:appendix_datasets}

Datasets in the empirical evaluation are depicted in Table \ref{tab:datasets}.

\begin{table}[!htb]
    \centering
    \caption{Datasets properties}
    \label{tab:datasets}
    \begin{tabular}{l l l l}
    \toprule
    Dataset & \# of classes & $n$ & $d$ \\
    \midrule
    Heart \cite{DETRANO1989304} & 2 & 303 & 13 \\
    Chess \cite{Dua:2019} & 17 & 28,056 & 6 \\
    Avila \cite{de2018reliable} & 12 & 20,867 & 10 \\
    Bank marketing \cite{moro2014data} & 2 & 45,211 & 16 \\
    Adult \cite{kohavi1996scaling} & 2 & 48,842 & 14 \\
    Covtype \cite{blackard1999comparative}& 7 & 581,012 & 54 \\
    rt-polaritydata \cite{Pang+Lee:05a} & 2& 10,433 & 15,888\\
    \bottomrule
    \end{tabular}
\end{table}

\subsection{Model training}
\label{sec:appendix_traning}

In sections~\ref{subsec:exp_common_explainers} and~\ref{subsec:exp_data_dependent} we have explained gradient boosted trees models trained over 6 datasets.
For each dataset, 66\% of it was used for model training and cross-validation.
Hyper-parameters were selected based on best mean accuracy over 3 cross-validation executions. 
The considered hyper-parameters are all combinations of the following:
\begin{itemize}
    \item \texttt{learning\_rate}: $2^{-5}, 2^{-4}, \ldots, 2^2.$
    \item \texttt{n\_estimators}: $50, 100, 150, 200, 250, 300.$
    \item \texttt{max\_depth}: $3, 4, 5, 6, 7.$
\end{itemize}

The selected hyper-parameters and test accuracy is presented in Table~\ref{tab:training_hyper_parm}.

\begin{table}[!htb]
    \centering
    \caption{Gradient boosted trees hyper-parameters and accuracy}
    \label{tab:training_hyper_parm}
    \begin{tabular}{l l l l l}
    \toprule
    Dataset & \texttt{learning\_rate} & \texttt{n\_estimators} & \texttt{max\_depth} & Test accuracy \\
    \midrule
    Heart \cite{DETRANO1989304} & 0.0625 & 250 & 3 & 0.8\\
    Chess \cite{Dua:2019} & 0.0625 & 300 & 7 & 0.9 \\
    Avila \cite{de2018reliable} & 0.125 & 300 & 5 & 0.99\\
    Bank marketing \cite{moro2014data} & 0.0625 & 250 & 5 & 0.91 \\
    Adult \cite{kohavi1996scaling} &  0.25 & 50 & 5 & 0.87\\
    Covtype \cite{blackard1999comparative} &  0.125 & 300 & 7 & 0.94 \\
    \bottomrule
    \end{tabular}
\end{table}

\newpage
\subsection{Sample complexity experiment}\label{sec:appendix_sample_complexity}

In Section~\ref{subsec:canonical_properties} Figure~\ref{fig:main_adult_decision_tree} we have studied the sample complexity of decision tree model and explainer over \texttt{Adult} dataset.
Figure~\ref{fig:decision_tree} depict the same concept over additional datasets.

\begin{figure*}[!htb]
    \centering
    \subfloat[Chess]{
         \includegraphics[width=.3\linewidth]{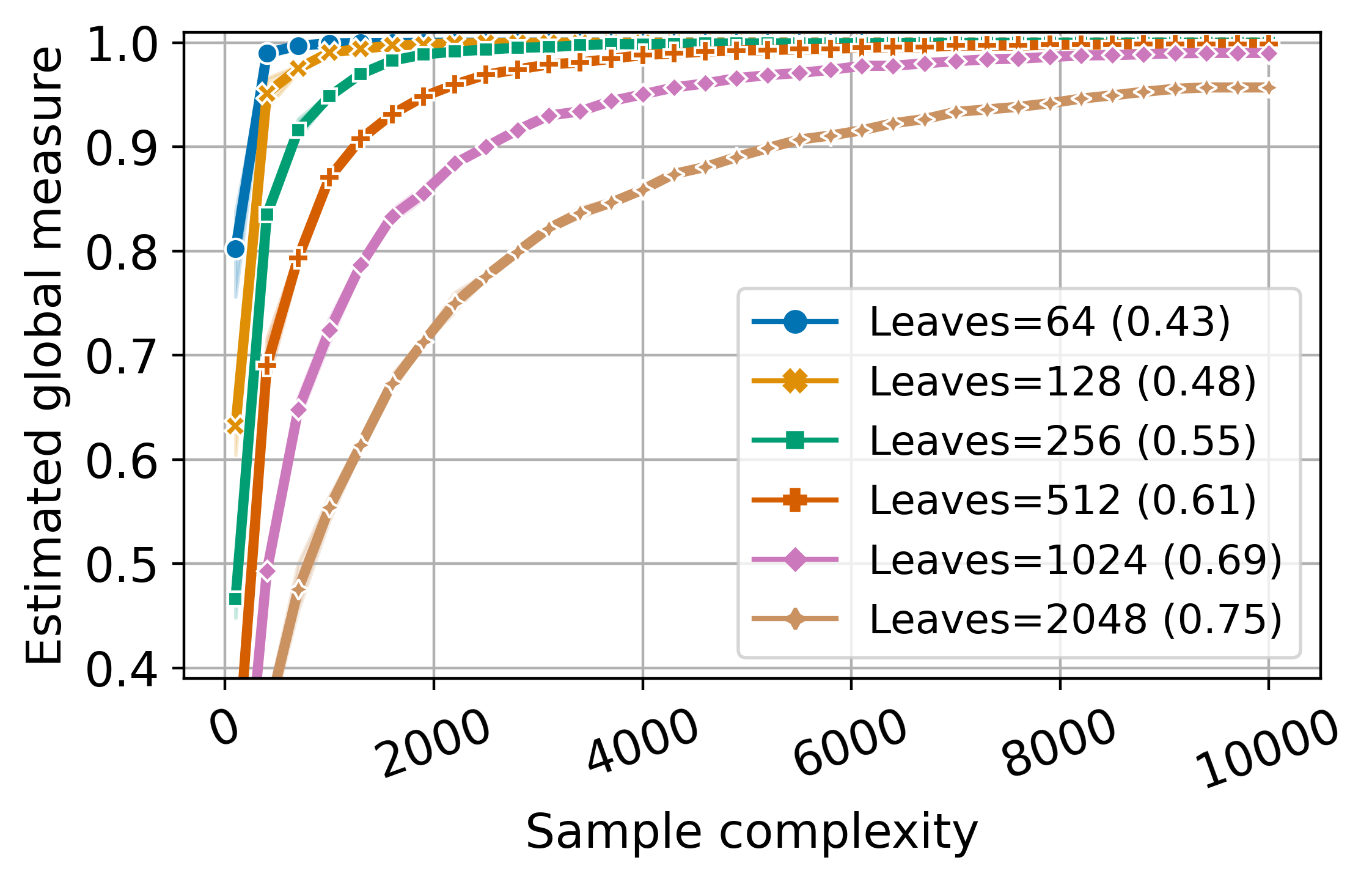}
    \label{fig:decision_tree_chess}}
     \subfloat[Avila]{
         \includegraphics[width=.3\linewidth]{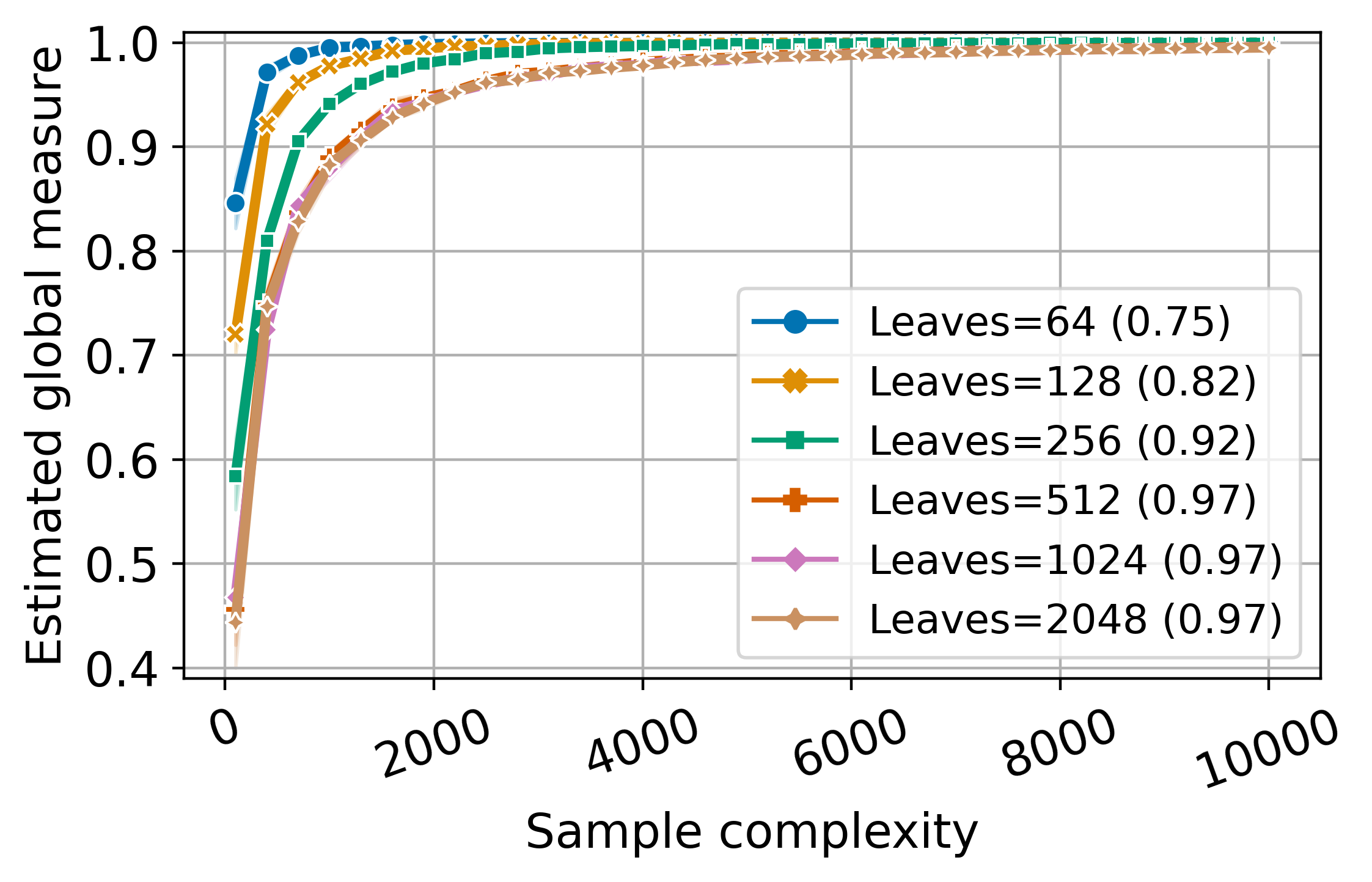}
    \label{fig:decision_tree_avila}}
    \subfloat[Bank marketing]{
         \includegraphics[width=.3\linewidth]{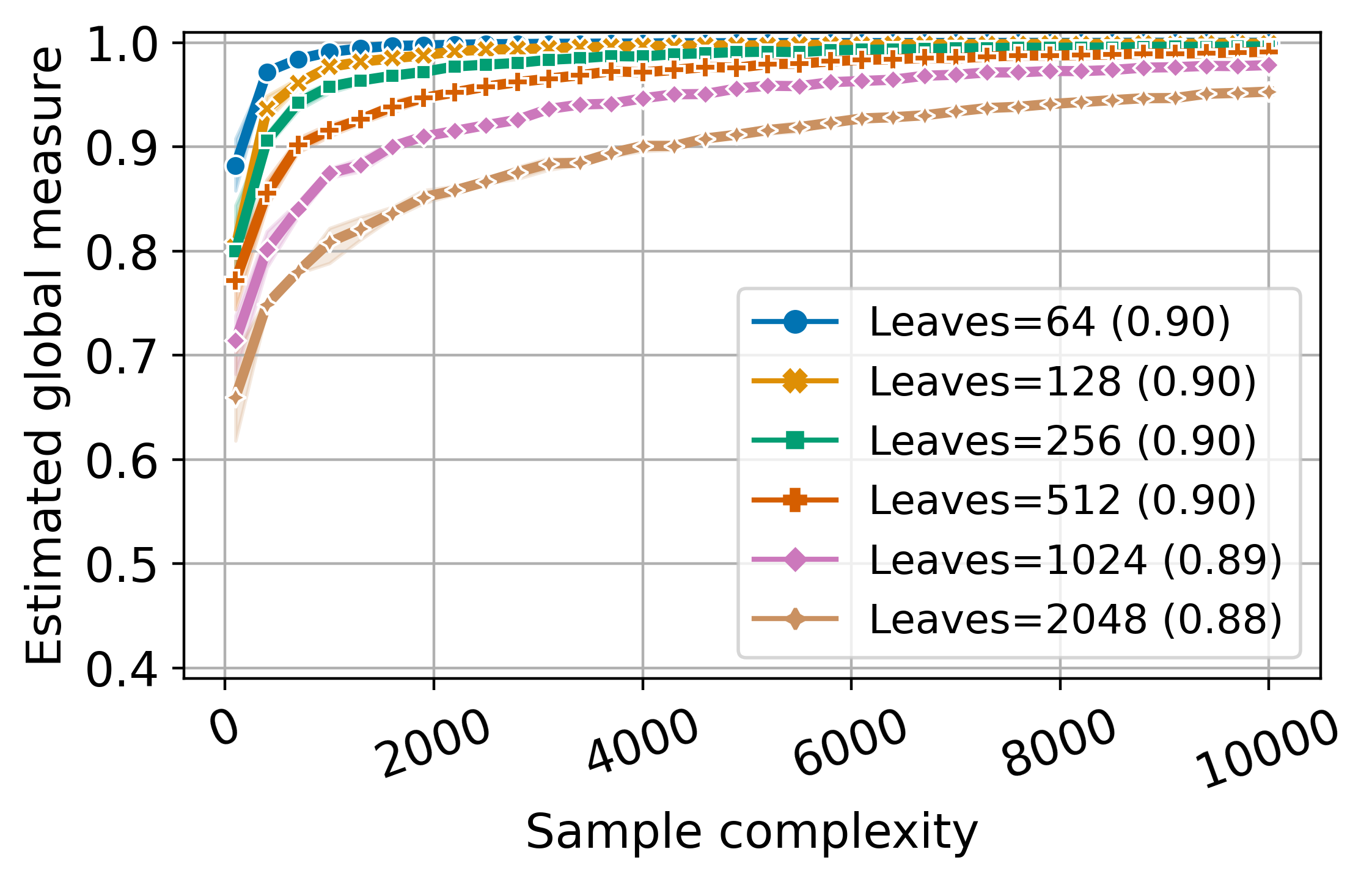}
    \label{fig:decision_tree_bank}}
    \\
    \subfloat[Adult]{
         \includegraphics[width=.3\linewidth]{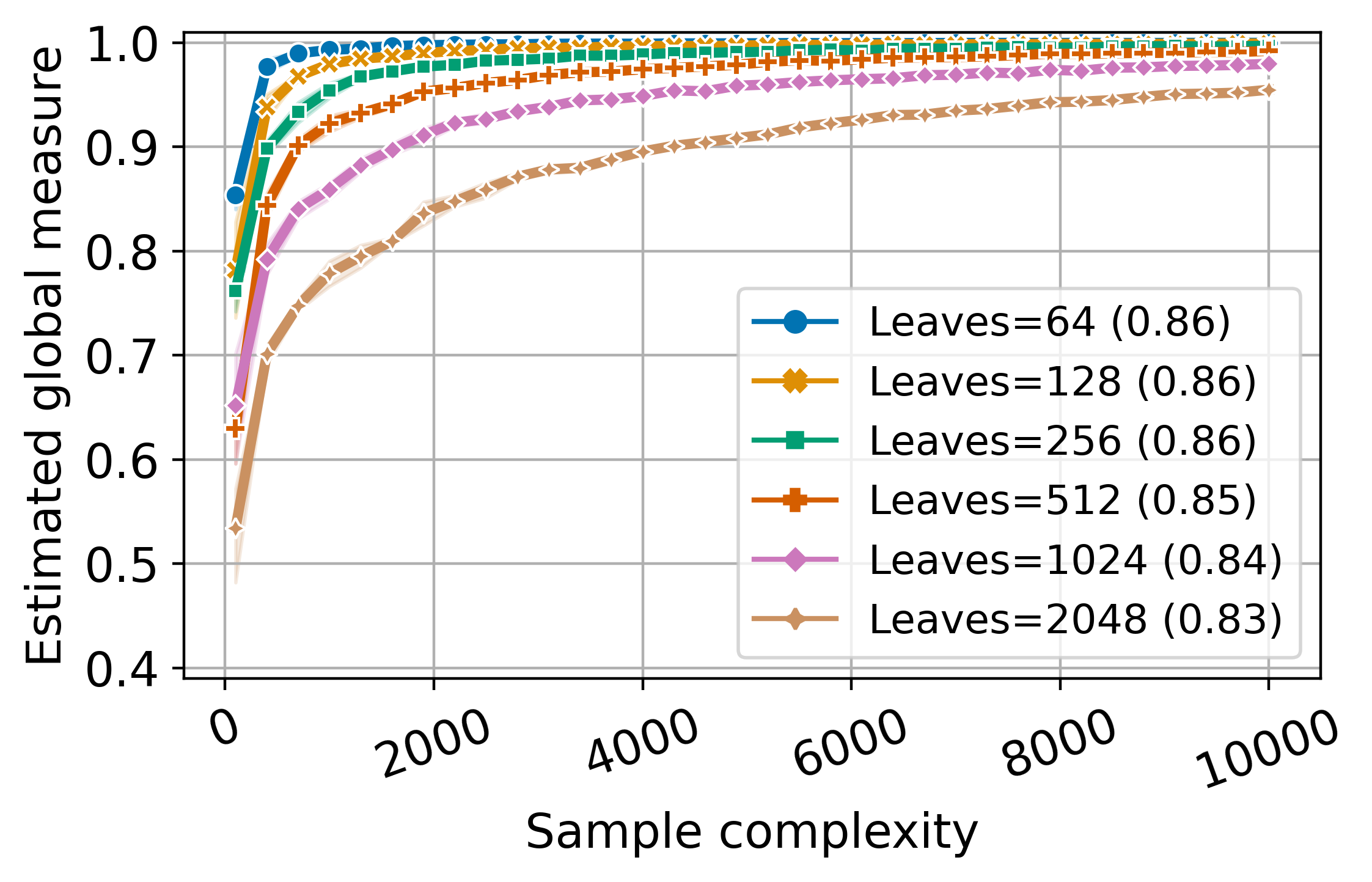}
    \label{fig:decision_tree_adult}}
    \subfloat[Covtype]{
         \includegraphics[width=.3\linewidth]{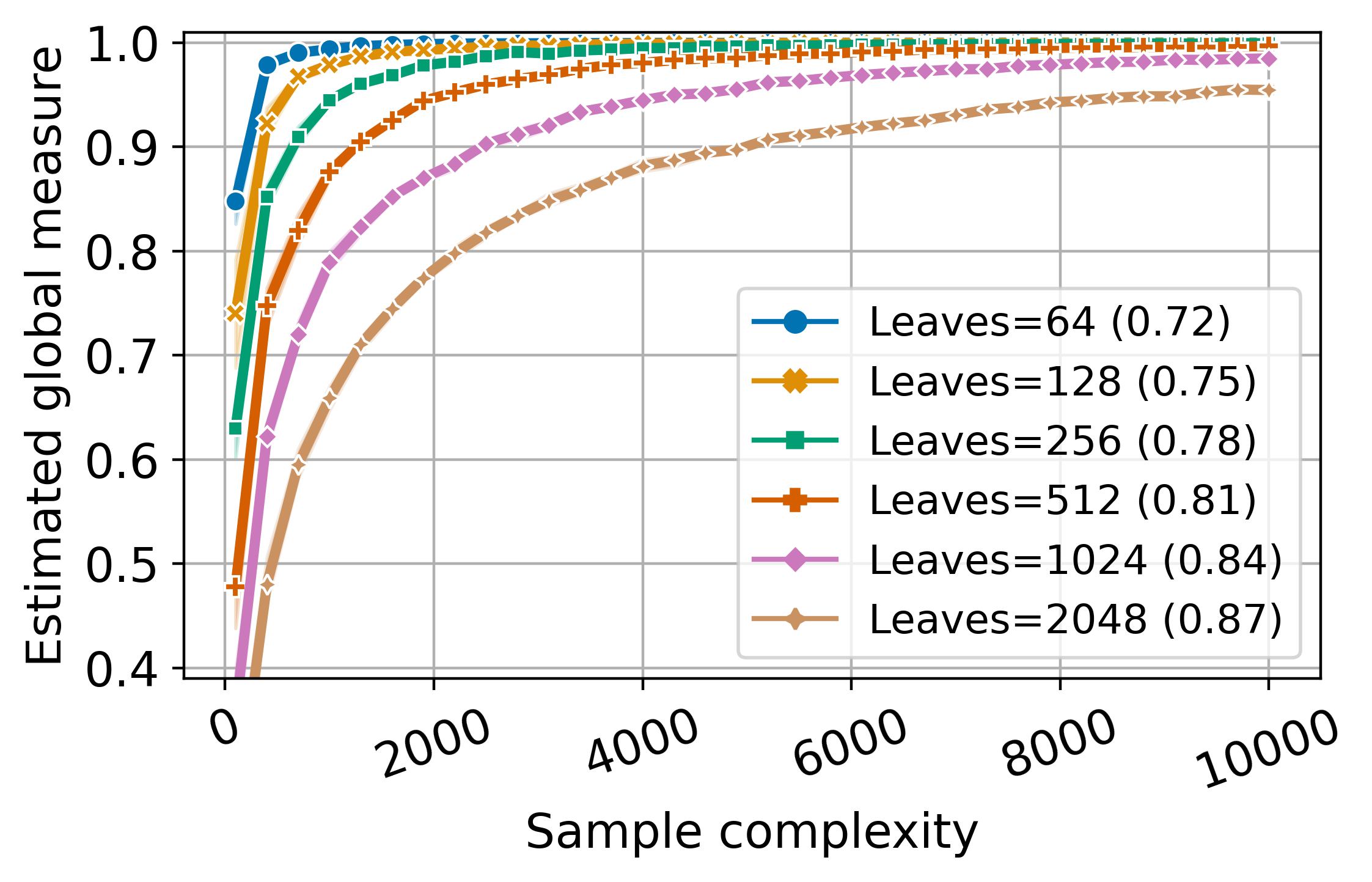}
    \label{fig:decision_tree_covtype}}    
    \caption{Estimated global consistency \& sufficiency of decision trees with different sizes on 5 datasets.
    As sample complexity grows the estimation is getting closer to the ground truth measures (1.0). Larger trees has more leaves, which implies a larger explanations domain. Decision tree accuracy over the full test set is reported in the legend parenthesis.
    The displayed results are the mean of 5 executions with confidence interval of 95\%.}
    \label{fig:decision_tree}
\end{figure*}

\begin{figure*}[!htb]
    \centering
    \subfloat[Chess]{
         \includegraphics[width=.3\linewidth]{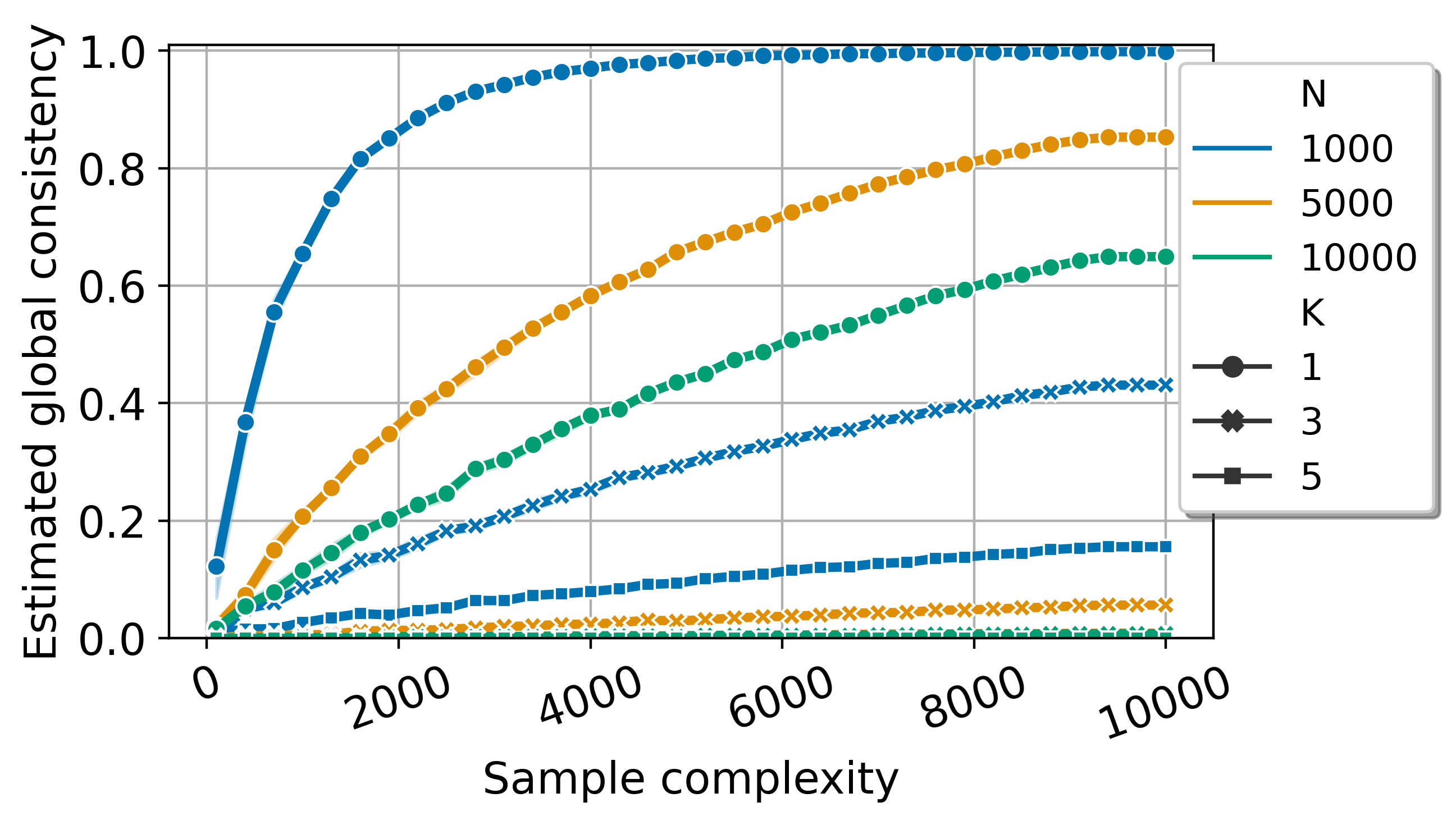}
    \label{fig:knn_chess}}
     \subfloat[Avila]{
         \includegraphics[width=.3\linewidth]{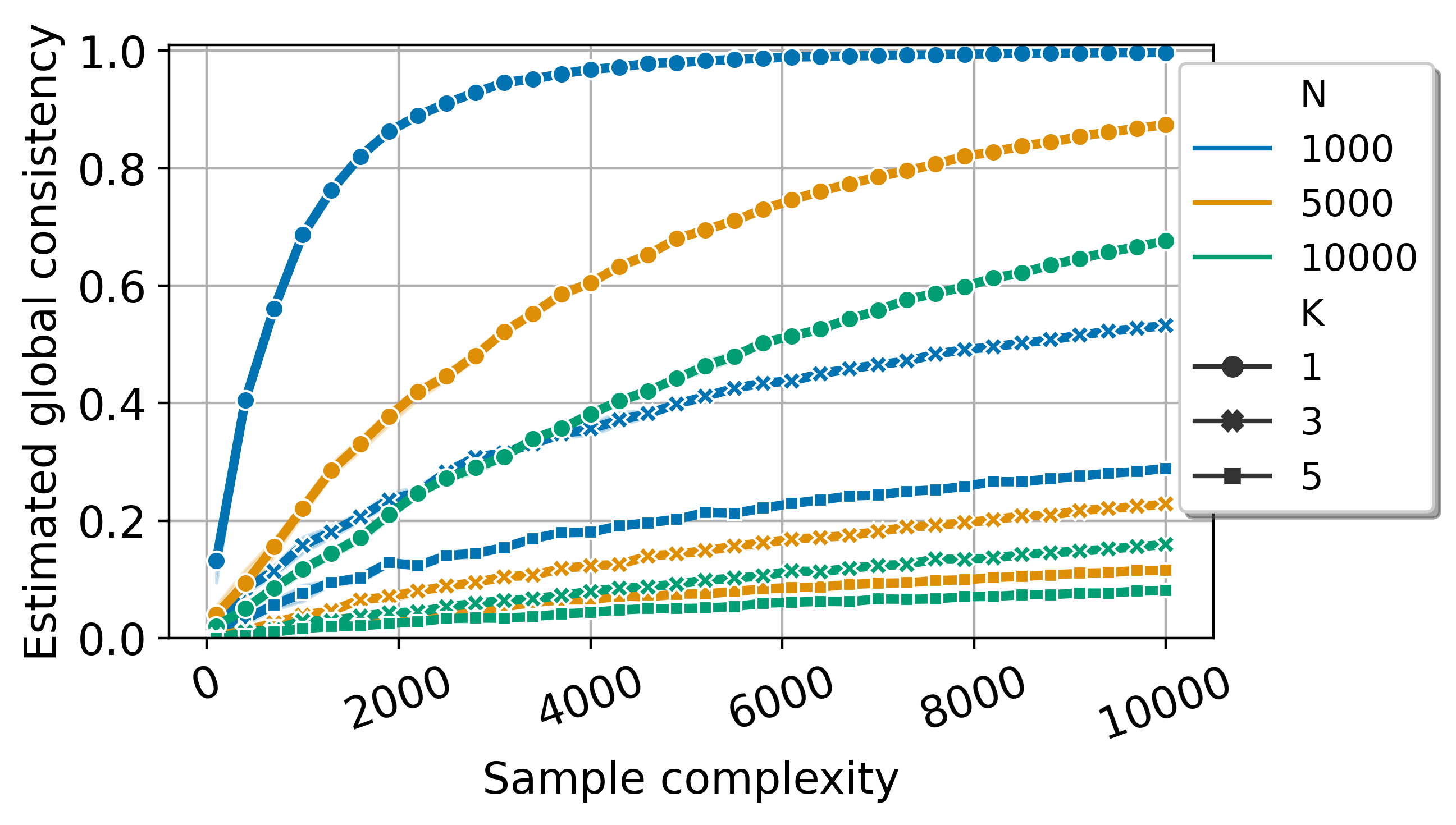}
    \label{fig:knn_avila}}
    \subfloat[Bank marketing]{
         \includegraphics[width=.3\linewidth]{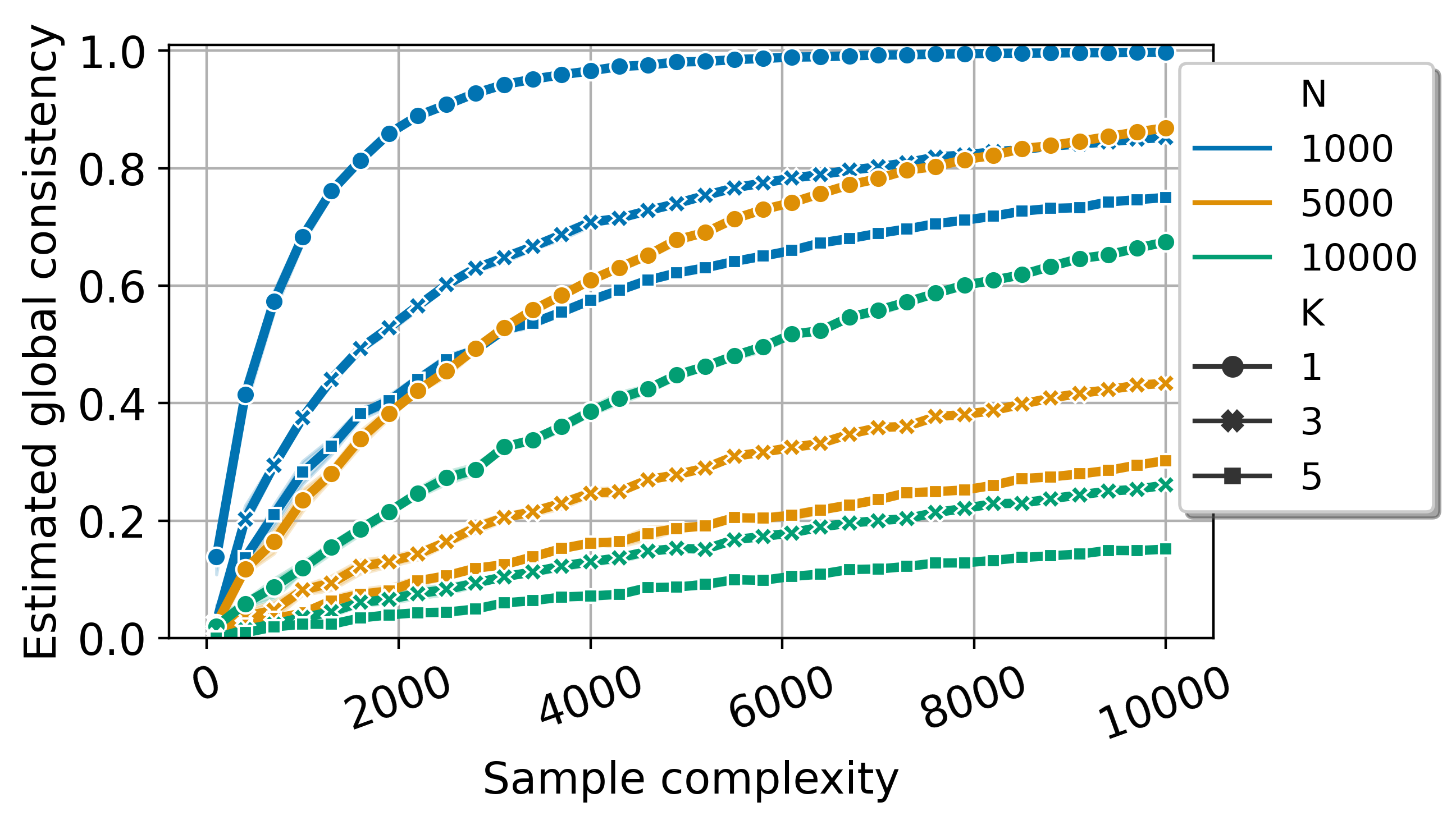}
    \label{fig:knn_bank}}
    \\
    \subfloat[Adult]{
         \includegraphics[width=.3\linewidth]{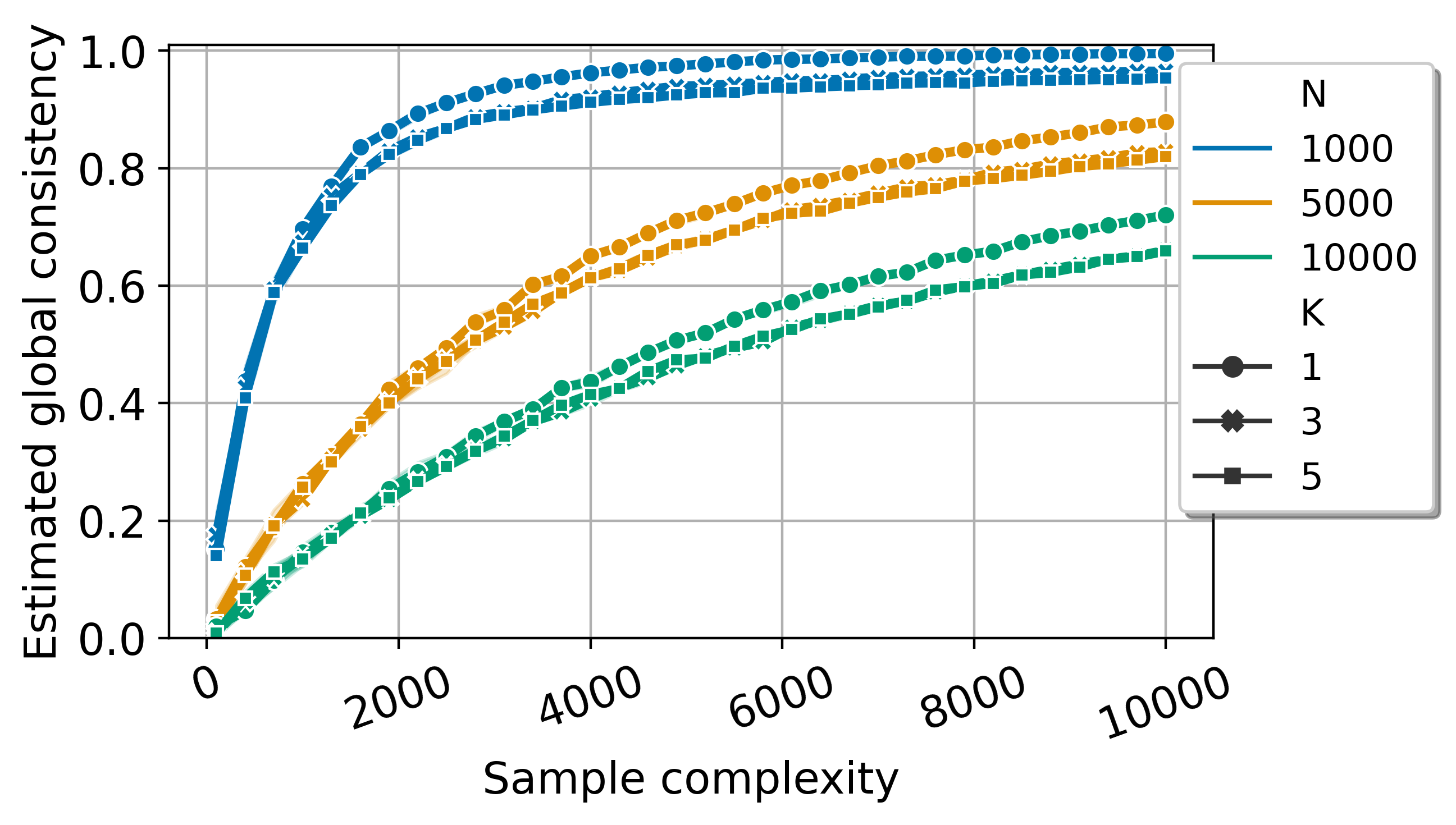}
    \label{fig:knn_adult}}
    \subfloat[Covtype]{
         \includegraphics[width=.3\linewidth]{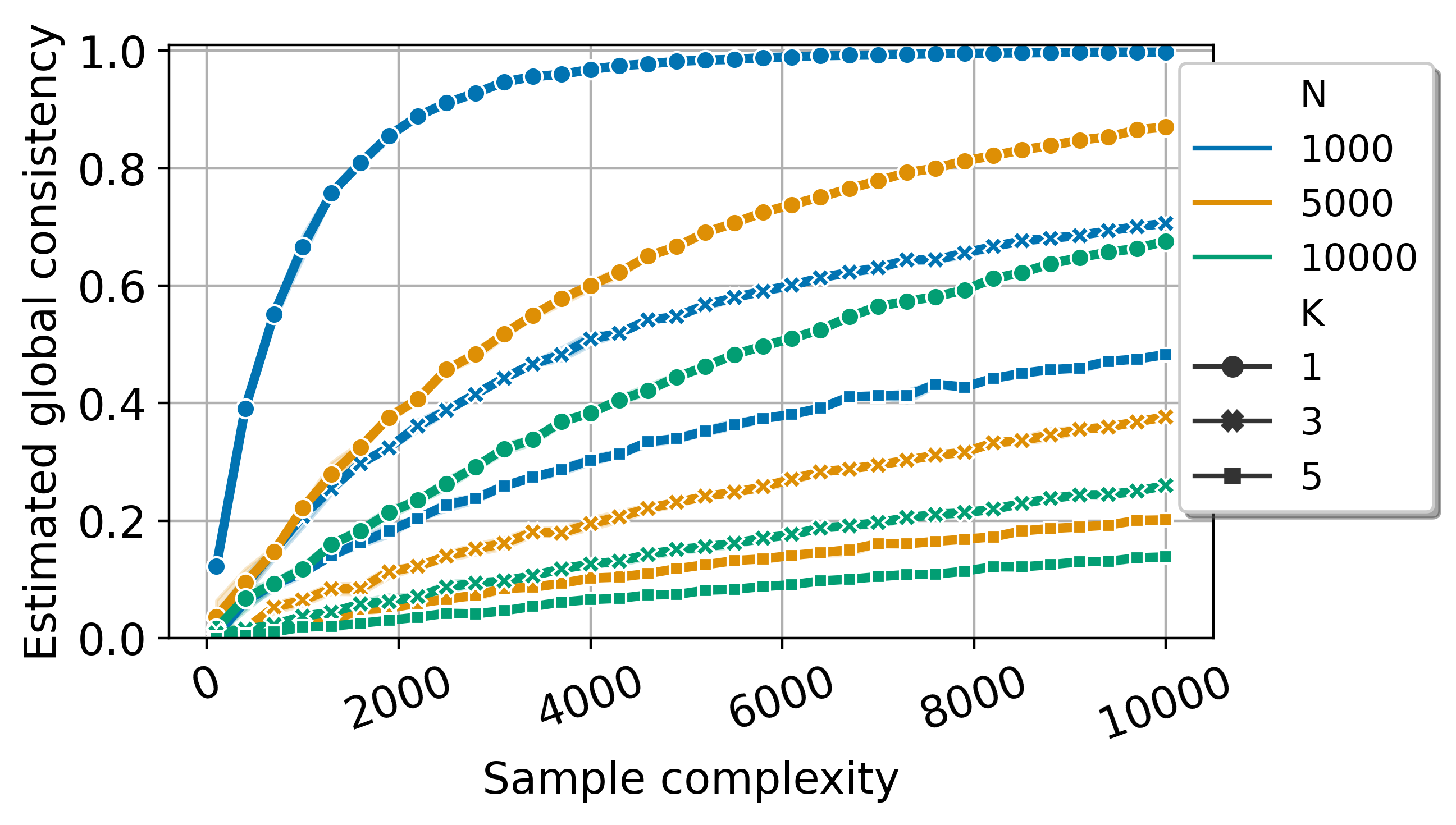}
    \label{fig:knn_covtype}}    
    \caption{Estimated global consistency of $k$ nearest neighbors with different sizes on 5 datasets.
    As sample complexity grows the estimation is getting closer to the ground truth measures (1.0). Larger $k$ and larger training set size ,$N$, implies a larger explanations domain. Accuracy over the full test set is reported in the legend parenthesis.
    The displayed results are the mean of 5 executions with confidence interval of 95\%.}
    \label{fig:knn}
\end{figure*}

Similarly, Figure~\ref{fig:knn} depict the sample complexity required for the evaluation of $k$ nearest neighbors model and explainer. As the explainer and model are the same, the explainer consistency is 1 by definition. Figure~\ref{fig:knn} shows that as $k$ or $N$ (number of training examples) increases, the explanations space grows, and as a result, more samples are required to accurately estimate the explainer consistency.

\subsection{Anchors dependency on precision \threshold{} parameter}\label{sec:appendix_anchors_precision}

Figure~\ref{fig:anchor} depict how the explainer's parameters affect global measures.
The Figure displays the measures of Anchors explainer, applied over gradient boosted trees trained over six datasets, as a function of the precision \threshold{} parameter. 
Similarly to the findings obtained in Figure~\ref{fig:adult_anchor_estimates}, one may see that as the precision increases, the sufficiency and uniqueness increases, while the estimated consistency decreases.

\begin{figure*}[!htb]
    \centering
    \subfloat[Heart]{
         \includegraphics[width=.3\linewidth]{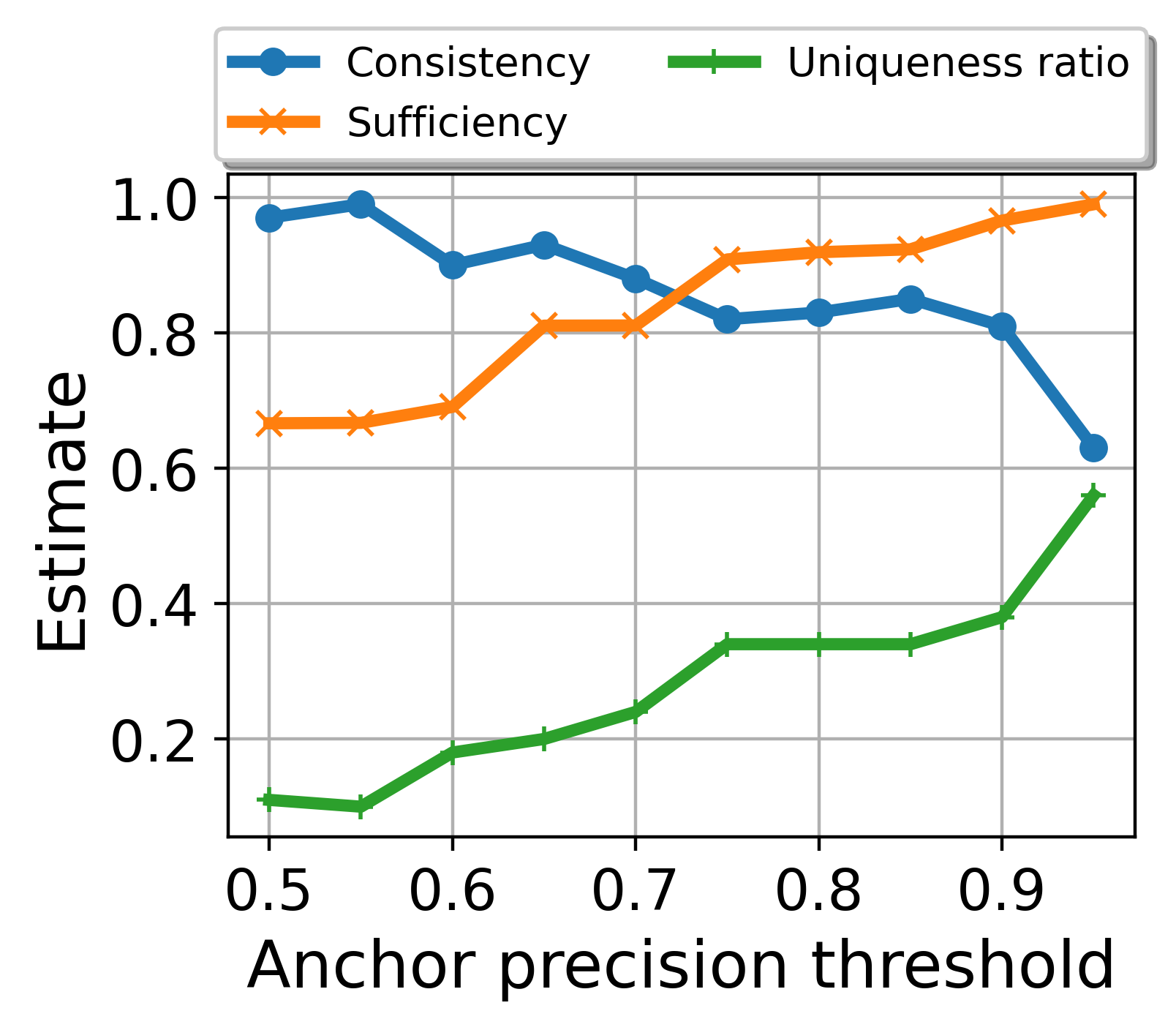}
    \label{fig:anchor_heart}}
    \subfloat[Chess]{
         \includegraphics[width=.3\linewidth]{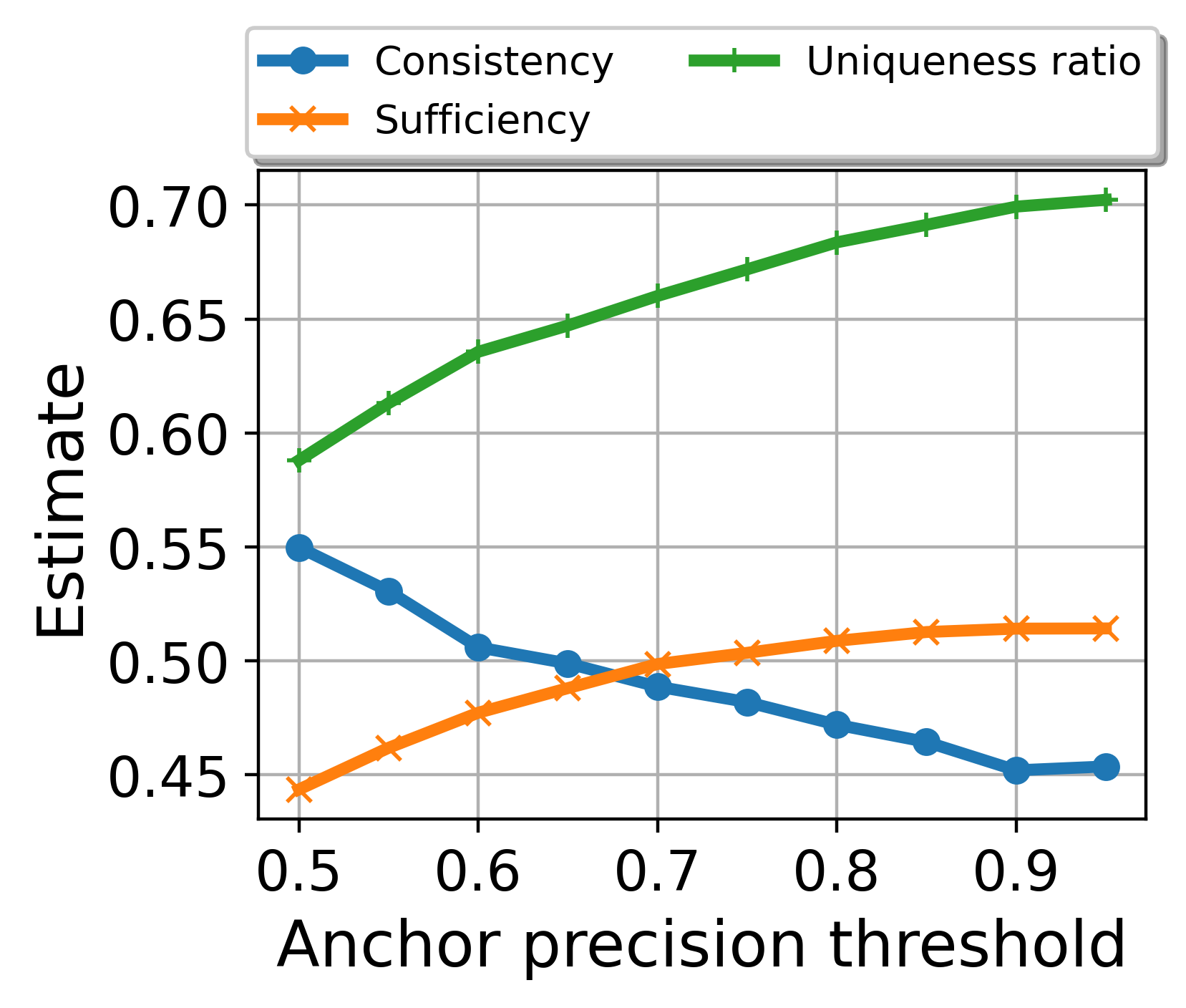}
    \label{fig:anchor_chess}}
     \subfloat[Avila]{
         \includegraphics[width=.3\linewidth]{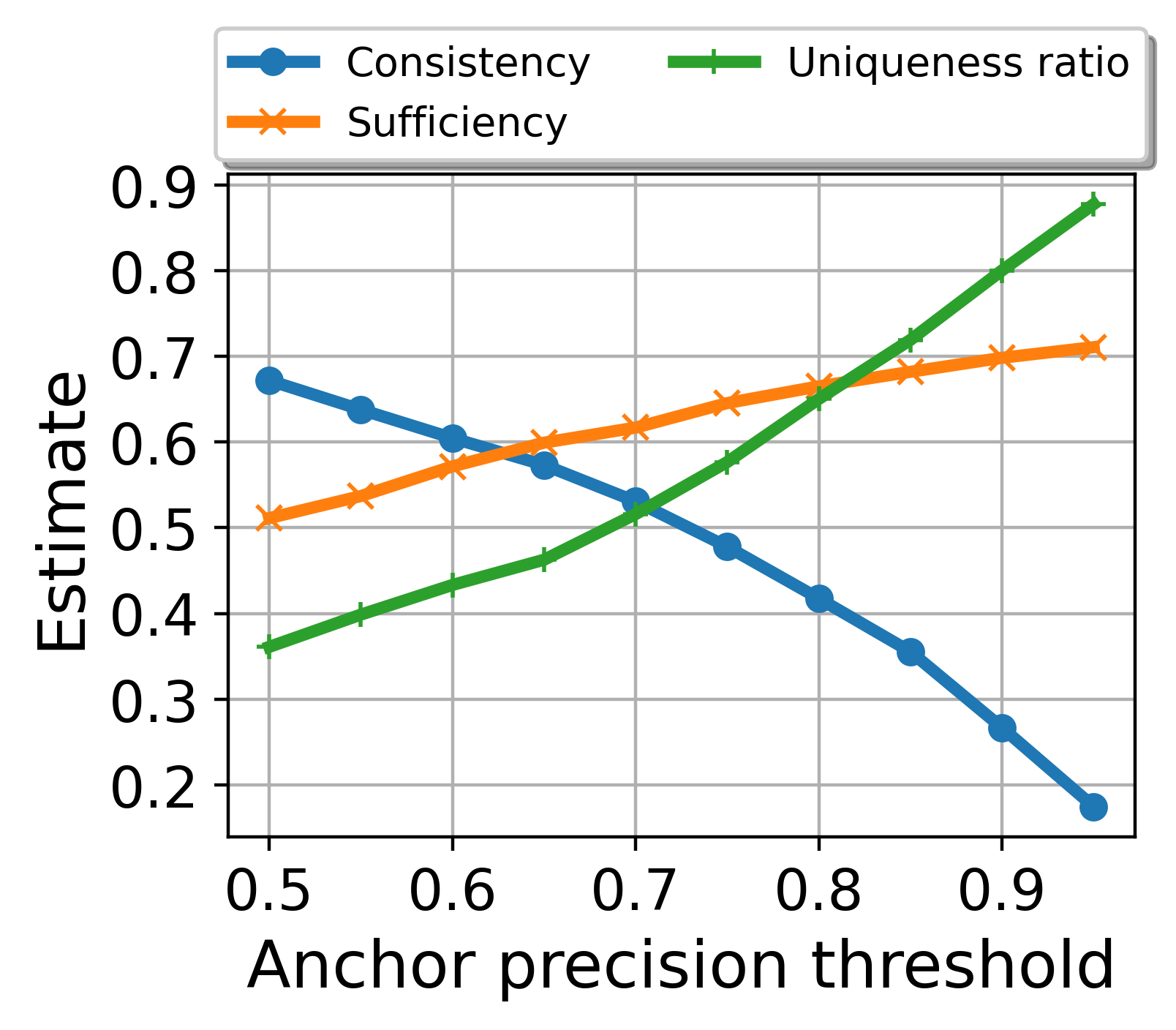}
    \label{fig:anchor_avila}}\\
    \subfloat[Bank marketing]{
         \includegraphics[width=.3\linewidth]{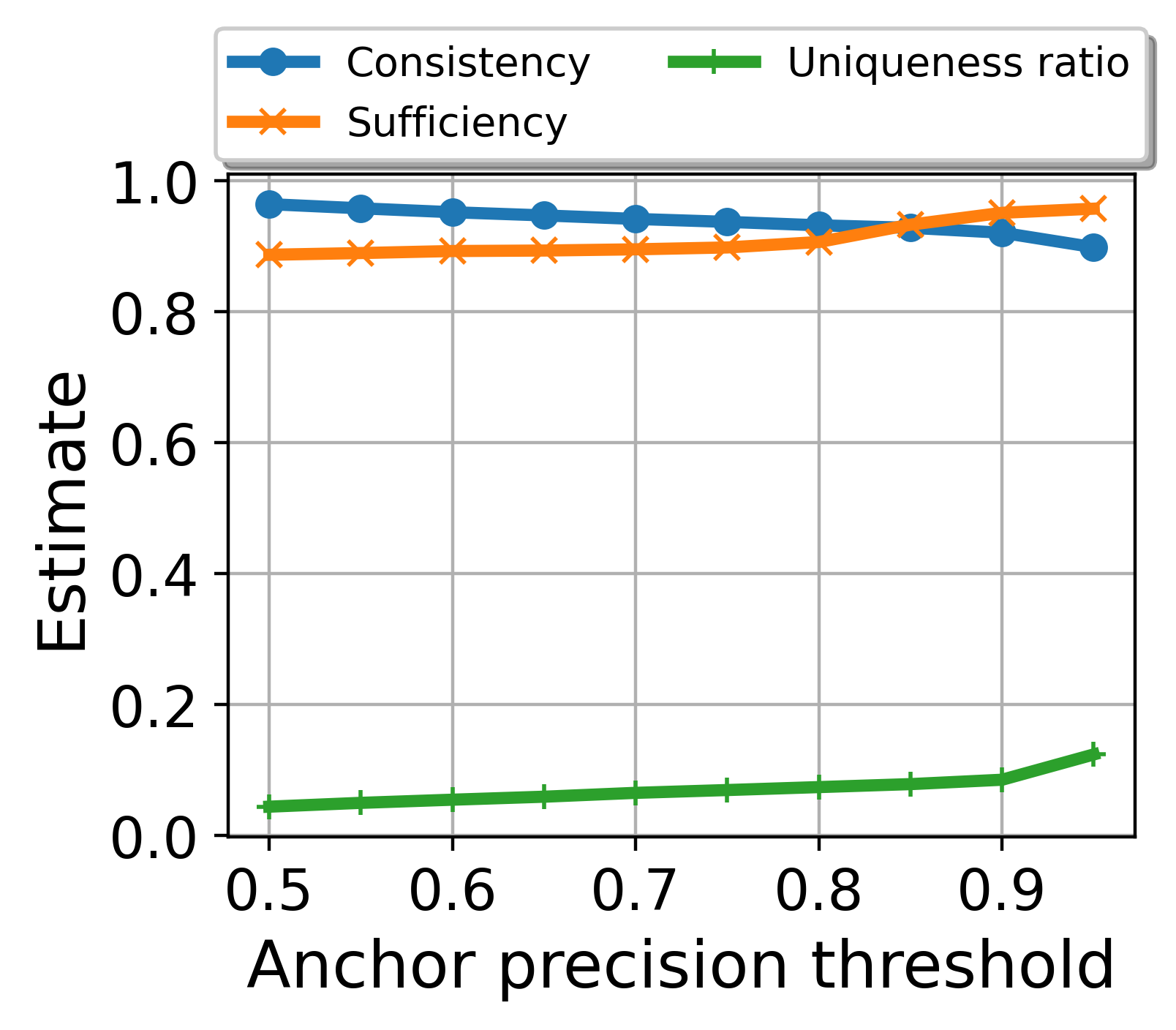}
    \label{fig:anchor_bank}}
    \subfloat[Adult]{
         \includegraphics[width=.3\linewidth]{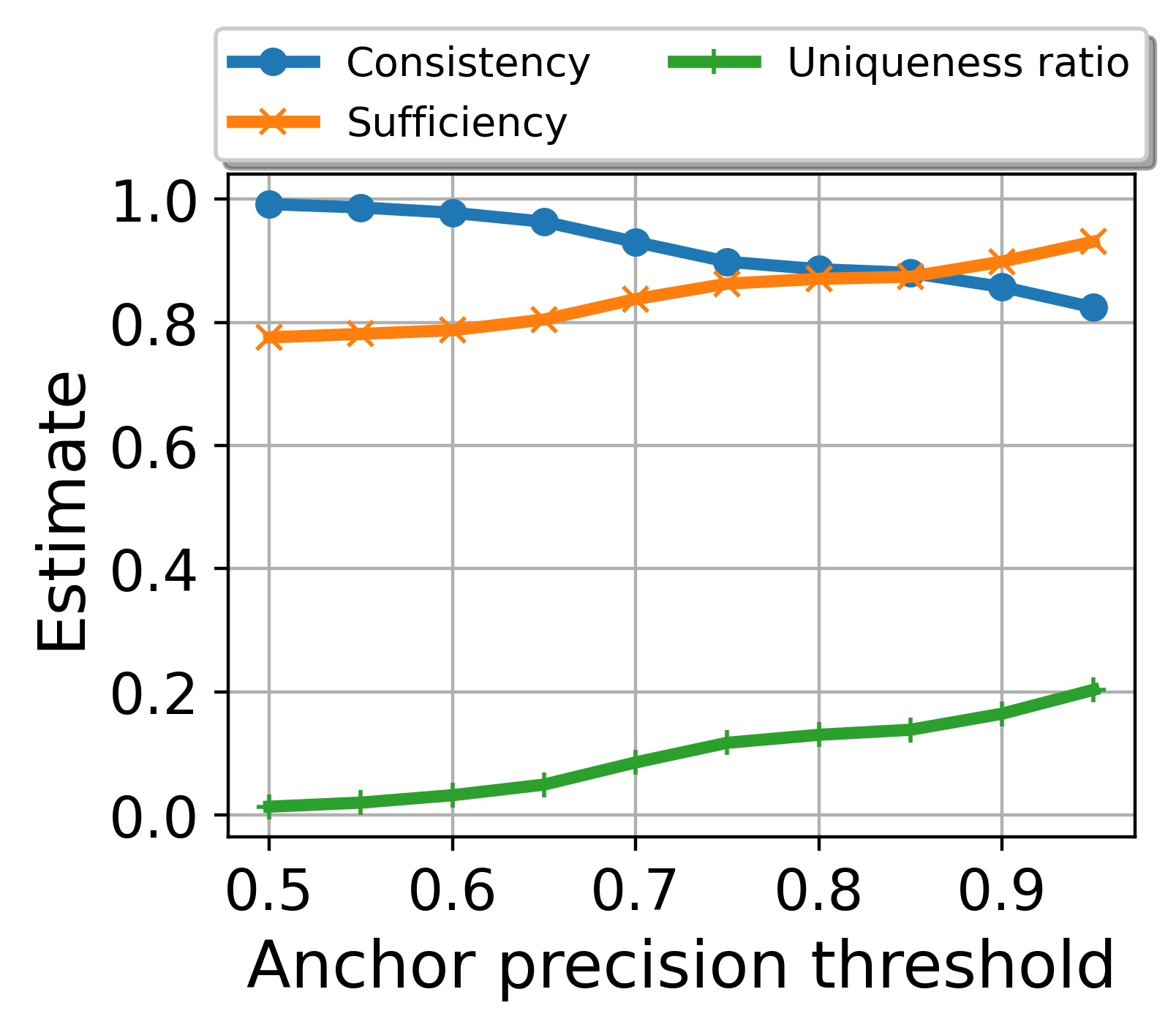}
    \label{fig:anchor_adult}}
    \subfloat[Covtype]{
        \includegraphics[width=.3\linewidth]{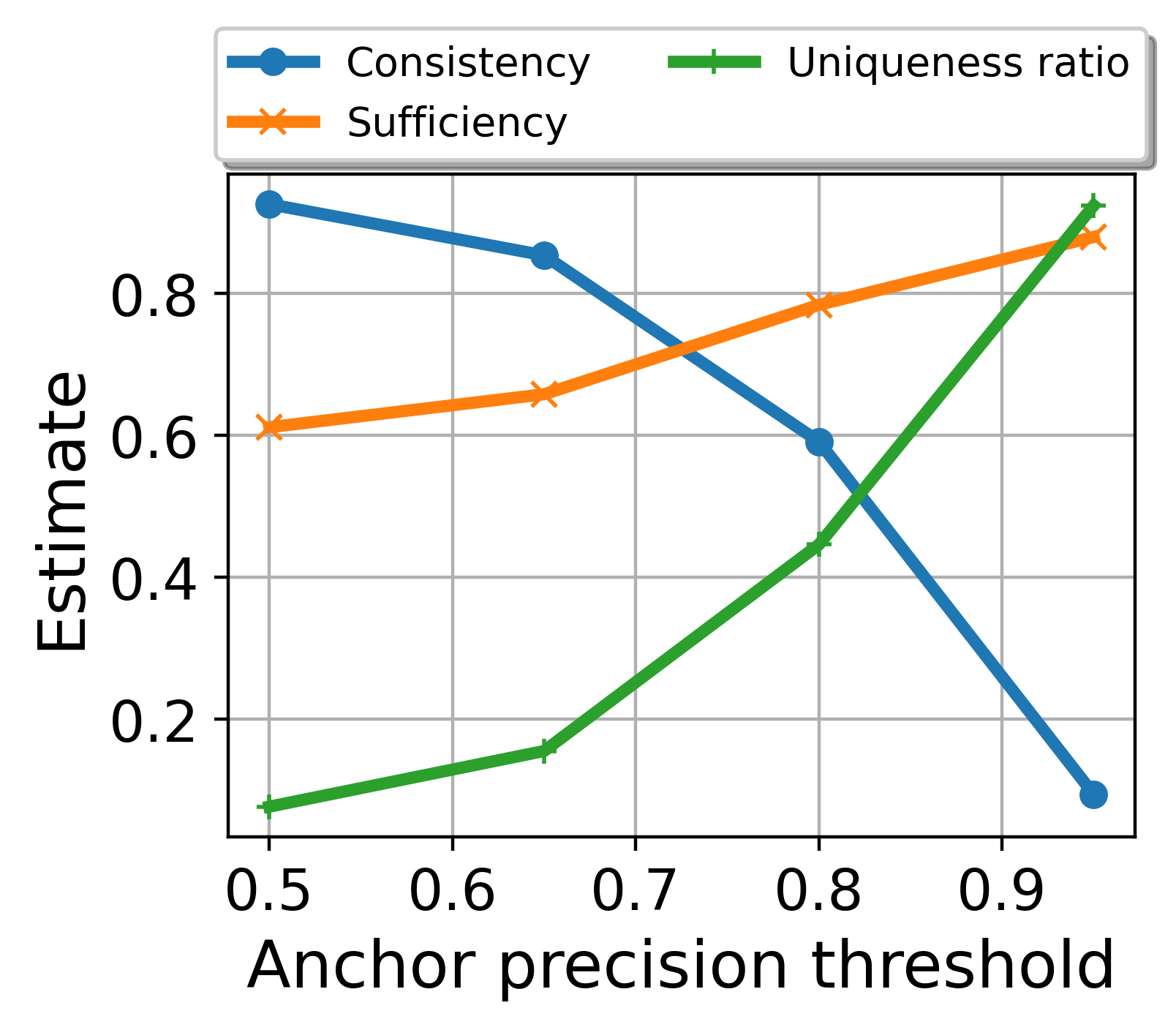}
        \label{fig:anchor_covtype}
    }
    \caption{Estimated global consistency and sufficiency and the number of unique explanations of the Anchors explainer over gradient boosted trees model for 6 dataset as a function of precision \threshold{} parameter.
    As the required precision grows the number of unique explanations (green) grows as well as the estimated sufficiency.
    }
    \label{fig:anchor}
\end{figure*}

\subsection{Explainers discretization}\label{sec:appendix_discretization}

Next we discuss several discretizations we have evaluated. 

\paragraph*{Feature importance}
Recall that for an explanation of type \emph{feature importance}, given an instance $x \in \mathbb{R}^d$ it returns a vector $\phi\in \mathbb{R}^d$ with $\phi_i$ the importance of the $i$-th feature.
For feature importance explainers, i.e. SHAP and LIME we compared the following discretization methods.

\begin{itemize}
    \item \emph{Original}: return $\phi$ as is.
    \item \emph{2-FP}: discretize $\phi$ to have 2 floating-points representation, i.e., return $\phi' \in \mathbb{R}^d$, such that $\phi'_i = \frac{\lfloor 100 \cdot \phi_i \rfloor}{100}$.
    \item \emph{1-FP}: discretize $\phi$ to have a single floating-point representation, i.e., return $\phi' \in \mathbb{R}^d$, such that $\phi'_i = \frac{\lfloor 10 \cdot \phi_i \rfloor}{10}$.
    \item \emph{Sign}: return $\phi' \in \{-1,1\}^d$ such that $\phi'_i = \mathtt{sign}(\phi_i)$.
    \item \emph{Rank}: return $\phi' = \mathtt{argsort}(\phi)$.
    \item \emph{Sign-of-top-5}: let $\phi^+ \in \mathbb{R}^d$ be the vector of absolute values of $\phi$, i.e. $\phi^+_i = \lvert \phi_i \rvert$, and let $\phi^{+,R} = \mathtt{argsort}(\phi^+)$, i.e. $\phi^{+,R}$ is the rank of $\phi$ absolute values. Sign-of-top-5 return $\phi' \in \{-1, 0 ,1\}^d$ such that $\phi'_i = \begin{cases} \mathtt{sign}(\phi_i) & \phi^{+,R}_i > d - 5\\
    0 & else\end{cases}$.
\end{itemize}

Tables~\ref{tab:shap_disc_full} and~\ref{tab:lime_disc_full} depict the consistency and uniqueness ratio of the above discretizations for SHAP and LIME respectively.

\begin{table*}[!htb]
    \centering
    \scriptsize
    \caption{SHAP consistency scores and uniqueness ratio for various discretizations, averaged over 5 executions (std is lower than 0.01 in all cases).}
    \label{tab:shap_disc_full}
    \begin{tabular}{l | l l | l l | l l | l l | l l | l l }
        \toprule
         \multirow{2}{*}{Dataset} &
         \multicolumn{2}{c |}{Original} &
         \multicolumn{2}{c |}{2-FP} &
         \multicolumn{2}{c |}{1-FP} & 
         \multicolumn{2}{c |}{Sign} &
         \multicolumn{2}{c |}{Rank} &
         \multicolumn{2}{c}{Sign-of-top-5} \\
         & 
         Cons. & Uniq. &
         Cons. & Uniq. &
         Cons. & Uniq. &
         Cons. & Uniq. &
         Cons. & Uniq. &
         Cons. & Uniq. \\
         \midrule
         Heart & 0.0 & 1.0 & 0.0 & 1.0 & \textbf{0.48} & 0.70 & 0.02 & 0.99 & 0.02 & 0.99 & 0.39 & 0.75 \\
         Chess & 0.0 & 1.0 & 0.0 & 1.0 & 0.0 & 1.0 & 0.33 & 0.02 & 0.32 & 0.15 & \textbf{0.35} & 0.06 \\
         Avila & 0.01 & 0.99 & 0.01 & 0.99 & 0.05 & 0.97 & \textbf{0.71} & 0.21 & 0.56 & 0.49 & 0.58 & 0.07 \\
         Bank marketing & 0.03 & 0.97 & 0.40 & 0.65 & \textbf{0.93} & 0.09 & 0.49 & 0.59 & 0.38 & 0.68 & 0.86 & 0.17 \\
         Adult & 0.02 & 0.98 & 0.11 & 0.93 & \textbf{0.95} & 0.08 & 0.68 & 0.44 & 0.15 & 0.89 & 0.89 & 0.15 \\
         Covtype & 0.01 & 0.99 & 0.03 & 0.97 & \textbf{0.68} & 0.36 & 0.13 & 0.89 & 0.09 & 0.92 & 0.41 & 0.03 \\
         \bottomrule
    \end{tabular}
\end{table*}

\begin{table*}[!htb]
    \centering
    \scriptsize
    \caption{LIME consistency scores and uniqueness ratio for various discretizations, averaged over 5 executions (std is lower than 0.08 in all cases).}
    \label{tab:lime_disc_full}
    \begin{tabular}{l | l l | l l | l l | l l | l l | l l }
        \toprule
         \multirow{2}{*}{Dataset} &
         \multicolumn{2}{c |}{Original} &
         \multicolumn{2}{c |}{2-FP} &
         \multicolumn{2}{c |}{1-FP} & 
         \multicolumn{2}{c |}{Sign} &
         \multicolumn{2}{c |}{Rank} &
         \multicolumn{2}{c}{Sign-of-top-5} \\
         & 
         Cons. & Uniq. &
         Cons. & Uniq. &
         Cons. & Uniq. &
         Cons. & Uniq. &
         Cons. & Uniq. &
         Cons. & Uniq. \\
         \midrule
         Heart & 0.0 & 1.0 & 0.0 & 1.0 & 0.34 & 0.81 & 0.02 & 0.99 &  0.0 & 1.0 & \textbf{0.46} & 0.66 \\
         Chess & 0.0 & 1.0 & 0.11 & 0.0 & 0.11 & 0.0 & \textbf{0.13} & 0.02 & \textbf{0.13} & 0.18 & \textbf{0.13} & 0.07 \\
         Avila & 0.0 & 1.0 & 0.23 & 0.0 & 0.23 & 0.0 & \textbf{0.37} & 0.17 & 0.01 & 0.98 & 0.28 & 0.33 \\
         Bank marketing & 0.0 & 1.0 & 0.0 & 1.0 & \textbf{0.87} & 0.0 & 0.65 & 0.48 & 0.0 & 1.0 & 0.84 & 0.10 \\
         Adult & 0.0 & 1.0 & 0.0 & 1.0 & \textbf{0.77} & 0.0 & \textbf{0.77} & 0.22 & 0.04 & 0.97 & 0.72 & 0.13  \\
         Covtype & 0.0 & 1.0 & 0.0 & 1.0 & 0.12 & 0.87 & 0.0 & 1.0 & 0.0 & 1.0 & \textbf{0.17} & 0.73  \\
         \bottomrule
    \end{tabular}
\end{table*}

\paragraph*{Counterfactuals}
Recall that for \emph{counterfatual} explanation, given an instance $x \in \mathbb{R}^d$ it returns a vector $x' \in \mathbb{R}^d$ such that $f(x) \neq f(x')$ and $x'$ is close to $x$. To obtain a counterfatual explanations we have used \texttt{DiCE}~\cite{mothilal2020dice}. As the space of explanations $\cE = \cX$ discretization of $\cE$ is essential for estimation of the explainability measures. To this end, we compared the following discretization methods.

\begin{itemize}
    \item \emph{Original}: return $x'$ as is.
    \item \emph{$\Delta$}: return $x' - x$, i.e. consider only the features that were modified.
    \item \emph{$\Delta$-sign}: return $x'' \in \mathbb{R}^d$, such that $x''_{i} = \mathtt{sign}(x'_{i} - x_{i})$.
    \item \emph{Is-feature-modified}: return $x'' \in \mathbb{R}^d$ such that $x''_i = \begin{cases} 1 & x_i = x'_i \\ 0 & else \end{cases}$.
\end{itemize}

Table~\ref{tab:counterfactuals_disc_full} depict the consistency and uniqueness ratio of the above discretizations.

\begin{table*}[!ht]
    \centering
    \scriptsize
    \caption{Counterfactuals consistency scores and uniqueness ratio for various discretizations, averaged over 5 executions (std is lower than 0.07 in all cases).}
    \label{tab:counterfactuals_disc_full}
    \begin{tabular}{l | l l | l l | l l | l l }
        \toprule
         \multirow{2}{*}{Dataset} &
         \multicolumn{2}{ c |}{Original} &
         \multicolumn{2}{ c |}{$\Delta$} &
         \multicolumn{2}{ c |}{$\Delta$-sign} &
         \multicolumn{2}{ c}{Is-feature-modified} \\
         & 
         Cons. & Uniq. &
         Cons. & Uniq. &
         Cons. & Uniq. &
         Cons. & Uniq. \\
         \midrule
         Heart & 0.0 & 1.0 & 0.01 & 1.0 & 0.19 & 0.88& \textbf{0.23} & 0.66 \\
         Chess & 0.09 & 0.70 & \textbf{0.14} & 0.17 & \textbf{0.14} & 0.02 & 0.13 & 0.01 \\
         Avila & 0.20 & 0.22 & 0.20 & 0.23 & \textbf{0.34} & 0.07 & 0.24 & 0.0 \\
         Bank marketing & 0.0 & 1.0 & 0.18 & 0.89 & \textbf{0.89} & 0.04 & 0.87 & 0.02 \\
         Adult & 0.0 & 1.0 & 0.07 & 0.96 & \textbf{0.91} & 0.04 & 0.81  & 0.01 \\
         Covtype & 0.0 & 1.0 & 0.38 & 0.46 & \textbf{0.57} & 0.03 & 0.52 & 0.02 \\
         \bottomrule
    \end{tabular}
\end{table*}

\end{document}